\newcommand{\E}{\mathbb{E}}
\DeclareMathOperator*{\argmax}{arg\,max}
\DeclareMathOperator*{\argmin}{arg\,min}
\newtheorem{proposition}{Proposition}
\newtheorem{lemma}{Lemma}
\newtheorem{assumption}{Assumption}
\newtheorem{theorem}{Theorem}
\title{Maximum Optimality Margin: A Unified Approach for Contextual Linear Programming and Inverse Linear Programming}
\author{Chunlin Sun$^*$, Shang Liu$^*$, Xiaocheng Li}
\date{\small 
Institute for Computational and Mathematical Engineering, Stanford University, chunlin@stanford.edu\\
Imperial College Business School, Imperial College London, (s.liu21, xiaocheng.li)@imperial.ac.uk}
\begin{document}
\maketitle

\onehalfspacing
\let\thefootnote\relax\footnotetext{$^*$ Equal contribution.}

\begin{abstract}
In this paper, we study the predict-then-optimize problem where the output of a machine learning prediction task is used as the input of some downstream optimization problem, say, the objective coefficient vector of a linear program. The problem is also known as predictive analytics or contextual linear programming. The existing approaches largely suffer from either (i) optimization intractability (a non-convex objective function)/statistical inefficiency (a suboptimal generalization bound) or (ii) requiring strong condition(s) such as no constraint or loss calibration. We develop a new approach to the problem called \textit{maximum optimality margin} which designs the machine learning loss function by the optimality condition of the downstream optimization. The max-margin formulation enjoys both computational efficiency and good theoretical properties for the learning procedure. More importantly, our new approach only needs the observations of the optimal solution in the training data rather than the objective function, which makes it a new and natural approach to the inverse linear programming problem under both contextual and context-free settings; we also analyze the proposed method under both offline and online settings, and demonstrate its performance using numerical experiments.  
\end{abstract}

\section{Introduction}

The predict-then-optimize problem considers a learning problem under a decision making context where the output of a machine learning model serves as the input of a downstream optimization problem (e.g. a
linear program). The ultimate goal of the learner is to prescribe a decision/solution for the downstream
optimization problem using directly the input (variables) of the machine learning model but without full
observation of the input of the optimization problem. A similar problem formulation was also studied as
prescriptive analytics \citep{bertsimas2020predictive} and contextual linear programming \citep{hu2022fast}.
While \cite{elmachtoub2022smart} justifies the importance of leveraging the optimization problem
structure when building the machine learning model, the existing efforts on exploiting the optimization
structure have been largely inadequate. In this paper, we delve deeper into the structural properties of
the optimization problem and propose a new approach called maximum optimality margin which builds a
max-margin learning model based on the optimality condition of the downstream optimization problem.
More importantly, our approach only needs the observations of the optimal solution in the training data
rather than the objective function, thus it draws an interesting connection to the inverse optimization
problem. The connection gives a new shared perspective on both the predict-then-optimize problem
and the inverse optimization problem, and our analysis reveals a scale inconsistency issue that arises
practically and theoretically for many existing methods.

Now we present the problem formulation and provide an overview of the existing techniques and related literature. Consider a linear program (LP) that takes the following standard form
\begin{align}
    \label{lp:std}
    \text{LP}(c,A,b) \coloneqq \min \ &  c^\top x,\\
    \text{s.t.\ } &  Ax=b, \ x\ge 0. \nonumber
\end{align}
where $c\in\mathbb{R}^n,$ $A\in\mathbb{R}^{m\times n},$ and $b\in\mathbb{R}^m$ are the inputs of the LP. In addition, there is an available feature vector $z\in\mathbb{R}^d$ that encodes the useful covariates (side information) associated with the LP. 

\textbf{Predict-then-optimize/Contextual LP}\

The problem of predict-the-optimize or contextual LP is stated as follows. A set of training data
$$\mathcal{D}_{\text{ML}}(T) \coloneqq \left\{\left(c_t, A_t, b_t, z_t\right)\right\}_{t=1}^T$$
consists of i.i.d. samples from an unknown distribution $\mathcal{P}$,
where $c_t\in\mathbb{R}^n,$ $A_t\in\mathbb{R}^{m\times n},$ $b_t\in\mathbb{R}^m$, and $z_t\in\mathbb{R}^d$. Throughout the paper, we assume the constraints have a fixed dimensionality for notational simplicity, while all the results can be easily extended to the case of variable dimensionalities. The goal of a conventional machine learning (ML) model is to identify a function $g(\cdot;\Theta): \mathbb{R}^d\rightarrow \mathbb{R}^n$ that best predicts the objective coefficient vector $c_t$ using the covariates $z_t$ with some model parametrized with $\Theta$. However, the predict-then-optimize problem has a slightly different pipeline for the testing phase. It aims to map from the observation of the context and the knowledge of the constraints to a decision (from data to decision):
$$(A_{\text{new}},b_{\text{new}},z_{\text{new}})\rightarrow x_{\text{new}}$$
without the observation of $c_{\text{new}}$, where the tuple $(c_{\text{new}}, A_{\text{new}},b_{\text{new}},z_{\text{new}})$ is a new test sample from $\mathcal{P}$. That is, for this new sample, the decision maker knows the constraints $(A_{\text{new}},b_{\text{new}})$, and the aim is to predict the unknown objective $c_{\text{new}}$ from $z_{\text{new}}$ and determine $x_{\text{new}}$ accordingly.

There are commonly three performance measures for the problem (as noted by \cite{chen2020online})
 
Prediction loss: $l_{\text{pre}}(\hat{\Theta}) = \E\left[\left\|c_{\text{new}}-\hat{c}_{\text{new}}\right\|_2^2\right]$

Estimate loss: $l_{\text{est}}(\hat{\Theta}) = \E\left[c_{\text{new}}^\top x_{\text{new}}-c_{\text{new}}^\top x_{\text{new}}^*\right]$

Suboptimality loss: $l_{\text{sub}}(\hat{\Theta}) = \E\left[\hat{c}_{\text{new}}^\top x_{\text{new}}^*-\hat{c}_{\text{new}}^\top \hat{x}_{\text{new}}\right]$
\smallskip

where $\hat{c}_{\text{new}}=g(z_\text{new};\hat{\Theta})$ is the predicted output of the ML model with parameters $\hat{\Theta}$. Here $\hat{x}_{\text{new}}$ and $x_{\text{new}}^*$ are the optimal solutions of LP$(\hat{c}_{\text{new}}, A_{\text{new}},b_{\text{new}})$ and LP$(c_{\text{new}}, A_{\text{new}},b_{\text{new}}),$ respectively. The expectations are taken with respect to this new sample. 

The prediction loss is aligned with the standard ML problems, where the L$_2$ loss can also be replaced by other proper metrics. The estimate loss captures the quality of the recommended decision $x_{\text{new}}$ under the true (unobserved) $c_{\text{new}}$ and it is a more natural loss given the interests in the downstream optimization task. The suboptimality loss measures how well the predicted $\hat{c}_{\text{new}}$ explains the realized optimal solution $x_{\text{new}}^*$. It is more commonly adopted in the inverse linear programming literature \citep{mohajerin2018data, barmann2018online, chen2020online}.

\textbf{Inverse linear programming}\

Our proposed approach to the predict-then-optimize problem also solves a seemingly unrelated problem -- inverse LP. In parallel to predict-then-optimize, the problem of inverse LP considers a set of training data
$$\mathcal{D}_{\text{inv}}(T) \coloneqq \left\{\left(x_t^*, A_t, b_t, z_t\right)\right\}_{t=1}^T.$$
Similarly to the previous case, the samples $(c_t, A_t, b_t, z_t)$'s are generated from an unknown distribution $\mathcal{P}$. Differently, for inverse LP, the optimal solution $x_t^*$ instead of the objective coefficient vectors $c_t$ is given in the training data. While the classic setting of inverse LP does not consider the context, i.e., $z_t\equiv 1$ for all $t,$ several recent works \citep{mohajerin2018data, besbes2021contextual} study the contextual case. The goal of the inverse LP is similar to that of the predict-then-optimize problem, to learn a function $g(\cdot;\Theta)$ that maps from the context $z_t$ to the objective $c_t$.

We emphasize that inverse LP is a much harder problem than contextual linear programming for several reasons. First, the observation of $x_t^*$ gives much less information than that of $c_t$, which makes the inverse problem information theoretically more challenging than the predict-then-optimize problem. Second, speaking of the objective function, directly minimizing the suboptimality loss $l_{\text{sub}}(\hat{\Theta})$ can lead to an unexpected failure. To see this, consider the naive prediction that always predicts $\hat{c}$ to be zero, always leading to an optimal zero suboptimality loss. Similar issues have also appeared in by ignored by the literature \citep{bertsimas2015data, mohajerin2018data, barmann2018online, chen2020online}, while our methods avoid such a problem (see Section \ref{sec:discussions}). 

In the following, we briefly review the representative existing approaches to tackle the two problems.

\textbf{Predicting the objective}\

The first class of approaches treats the predict-then-optimize problem as a two-step procedure. The learner first learns a function $g(\cdot;\hat{\Theta}): z \rightarrow c$ from the training data, and then prescribes the final output $x_\text{new}$ by the optimal solution of $\text{LP}(g(z_{\text{new}};\hat{\Theta}), A_{\text{new}},b_{\text{new}}).$ There are mainly two existing routes for how the parameter $\hat{\Theta}$ should be estimated from the training data. The first route is to ignore the constraint $(A_t, b_t)$ (in the training data) and treat the training problem as a pure ML problem \citep{ho2022risk}. The issue of this route is that there can be a misalignment between the ML loss $l_{\text{pre}}(\cdot)$ and the downstream optimization loss $l_{\text{est}}(\cdot)$ or $l_{\text{sub}}(\cdot)$. Empirically, it may cause sample inefficiency if one is interested in $l_{\text{est}}(\cdot)$. Theoretically, establishing a performance guarantee for the optimization loss, it requires a calibration condition \citep{bartlett2006convexity} which can be hard to satisfy/verify \citep{ho2022risk}. The second route is to employ the optimization loss $l_{\text{est}}(\cdot)$ for the estimation of $\hat{\Theta}$ \citep{elmachtoub2022smart, elmachtoub2020decision}. However, the loss function is generally non-convex in the parameter $\Theta$. A convex surrogate loss called SPO+ is proposed, but it also suffers from the misalignment issue when establishing a finite sample guarantee. To solve this problem, \cite{liu2021risk} show that the calibration condition holds, but an $O(T^{-1/2})$ convergence rate of the SPO+ loss only leads to an $O(T^{-1/4})$ convergence rate of the SPO loss, which is suboptimal (\cite{el2019generalization}). In the literature of inverse LP, the (convex) suboptimality loss $l_{\text{sub}}(\cdot)$ is often used instead of $l_{\text{est}}(\cdot)$. Specifically, our proposed approach falls in this category of first predicting the objective and then solving the LP. 

\textbf{Predicting the optimal solution}\

The second class of approaches treats the predict-then-optimize problem as a one-step procedure and aims to learn an end-to-end function that maps from the context $z_t$ directly to the optimal solution $x_t^*$ \citep{bertsimas2020predictive, hu2022fast}. This end-to-end treatment works well in the unconstrained setting. But for the constrained setting, the optimal solution of an LP generally stays at the corner of the feasible simplex, and the end-to-end mapping can hardly predict such a corner solution. In the domain of inverse LP, a recent work \citep{tan2020learning} also proposes a loss function minimizing the gap between the predicted optimal solution and the observed optimal solution. The idea is aligned with the early formulation of inverse optimization \citep{zhang1996calculating, ahuja2001inverse}. However, the formulation in \citep{tan2020learning} is more of a conceptual framework that is hardly computationally tractable.

Some existing studies on differentiable optimization also follow an end-to-end fashion and do not require objective functions as training data. Since the optimal solution of an LP is always at the corner, the change of optimal solution with respect to the coefficient is generally discontinuous, which restricts the possibility of backward propagation-based gradient descent learning. To address the discontinuity, one can either add some noise to the objective (\cite{berthet2020learning}) or add a regularization term in the objective (\citet{wilder2019melding}) to smooth the mapping from the objective to the optimal solution. Then, they can apply gradient descent to find the mapping. However, this smoothing technique only works for problems with fixed constraints but not the general random constraints as considered in our paper. The differentiable model training is based on a convex surrogate loss named Fenchel-Young loss \citep{blondel2020learning} which also enjoys some surrogate property \citep{blondel2019structured}, yet no finite sample bound has been obtained for the proposed algorithms. The training is also computationally costly: it requires a Monte-Carlo step (suppose we sample $M$ times) at each gradient computation, which needs solving $M$ times more linear programs than SPO+. An analogous idea of turning the non-differentiable case into the differentiable one is \citet{wilder2019melding} which adds a quadratic regularization term to the objective, while the consistency cannot be guaranteed.

\textbf{Consistency/Feasibility check.}\

For the inverse LP problem, a common approach in literature \citep{zhang1996calculating, ahuja2001inverse,boyd2004convex, bertsimas2015data, barmann2018online, besbes2021contextual} is to transform the knowledge of the optimal solution $x_t^*$ equivalently to constraints on the objective vector $c_t\in\mathcal{C}_t$ for some polyhedron $\mathcal{C}_t$ (through the optimality condition), and then utilize these $\mathcal{C}_t$ to make inference about the objective vector. There can be two issues with this approach. First, many existing results study the context-free case and require a non-empty intersection of the polyhedrons, i.e., $\cap_{t=1}^T \mathcal{C}_t \neq \emptyset.$ This requirement may fail in a noisy or contextual setting. Second, from a methodology perspective, it is difficult to relate the structure of $\mathcal{C}_t$ with the covariates $z_t.$ This $\mathcal{C}_t$ also highlights the difference between predict-then-optimize and inverse LP: the former observes $c_t$ in the training data, while the latter only knows $c_t\in\mathcal{C}_t$ for some polyhedron $\mathcal{C}_t$. 

\section{LP Optimality and Main Algorithm}
\label{sec:msa}

Now we first present some preliminaries on LP and then describe our main algorithm. Let $x^*=(x_1,...,x_n)^\top$ be the optimal solution of the LP$(c,A,b).$ The optimal basis $\mathcal{B}^*$  and its complement $\mathcal{N}^*$ of an LP are defined as follows
$$\mathcal{B}^* \coloneqq \{i:x_i^*>0\}, \ \ \mathcal{N}^* \coloneqq \{i:x_i^*=0\}.$$

For a set $\mathcal{B}\subset [n]$, we use $A_{\mathcal{B}}$ to denote the submatrix of $A$ with column indices corresponding to $\mathcal{B}$ and $c_{\mathcal{B}}$ to denote the subvector with corresponding dimensions. We make the following assumptions on the LP's nondegeneracy.

\begin{assumption}[Nondegeneracy]
All the LPs in this paper are \textit{nondegenerate}, i.e., satisfying the two conditions:
\begin{itemize}
\item[(a)] The LP is feasible and has a unique optimal solution. 
\item[(b)] All the submatrices $A_\mathcal{B}$ are invertible for $|\mathcal{B}|=m$. The optimal basis satisfies $|\mathcal{B}^*|=m$.
\end{itemize}
\label{assp_nondeg}
\end{assumption}

The assumption is standard in the literature of LP. It is a mild one in that any LP can satisfy the assumption under an arbitrarily small perturbation \citep{megiddo1989varepsilon}. We also note that our focus on the standard-form LP \eqref{lp:std} is without loss of generality because all the LPs can be written in the standard form and the results in this paper can be easily adapted to an LP of other forms.

The following lemma describes the optimality condition for an LP in terms of its input. 

\begin{lemma}[\cite{luenberger1984linear}]
\label{lem:opt_con}
Let $\mathcal{B}\subset [n]$ be a feasible basis of $\mathrm{LP}(c,A,b)$, i.e., $A^{-1}_{\mathcal{B}}b\ge 0,$ and let $\mathcal{N} = [n]\backslash \mathcal{B}$ be its complement. Then, under Assumption \ref{assp_nondeg}, the following inequality holds (element-wise)
\begin{equation}
   c_{\mathcal{N}}^\top - c_{\mathcal{B}}^\top A^{-1}_{\mathcal{B}} A_{\mathcal{N}} \ge 0 
   \label{lp_opt_con}
\end{equation}
if and only if $\mathcal{B}=\mathcal{B}^*.$
\end{lemma}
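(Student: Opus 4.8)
The plan is to prove both implications through the standard device of re-expressing the LP objective over the nonbasic variables. Since $\mathcal{B}$ is a feasible basis with $|\mathcal{B}|=m$, Assumption \ref{assp_nondeg}(b) guarantees $A_{\mathcal{B}}$ is invertible, so for any feasible $x=(x_{\mathcal{B}},x_{\mathcal{N}})$ the equality constraint $Ax=b$ yields $x_{\mathcal{B}} = A_{\mathcal{B}}^{-1}b - A_{\mathcal{B}}^{-1}A_{\mathcal{N}}x_{\mathcal{N}}$. Substituting into the objective gives the key identity
\begin{equation}
    c^\top x = c_{\mathcal{B}}^\top A_{\mathcal{B}}^{-1} b + \left(c_{\mathcal{N}}^\top - c_{\mathcal{B}}^\top A_{\mathcal{B}}^{-1} A_{\mathcal{N}}\right) x_{\mathcal{N}},
    \label{eq:plan_obj}
\end{equation}
so the objective equals a basis-dependent constant plus the reduced-cost vector $\bar{c}_{\mathcal{N}}^\top \coloneqq c_{\mathcal{N}}^\top - c_{\mathcal{B}}^\top A_{\mathcal{B}}^{-1}A_{\mathcal{N}}$ applied to $x_{\mathcal{N}}$. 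I denote by $\tilde{x}$ the basic feasible solution induced by $\mathcal{B}$, i.e.\ $\tilde{x}_{\mathcal{B}} = A_{\mathcal{B}}^{-1}b \ge 0$ and $\tilde{x}_{\mathcal{N}}=0$; its objective value is precisely the constant term in \eqref{eq:plan_obj}.

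For the ``if'' direction, suppose \eqref{lp_opt_con} holds, i.e.\ $\bar{c}_{\mathcal{N}} \ge 0$. Any feasible $x$ has $x_{\mathcal{N}}\ge 0$, hence $\bar{c}_{\mathcal{N}}^\top x_{\mathcal{N}} \ge 0$, and \eqref{eq:plan_obj} gives $c^\top x \ge c^\top \tilde{x}$. Thus $\tilde{x}$ is optimal, and by the uniqueness in Assumption \ref{assp_nondeg}(a) it must coincide with $x^*$. Since $\tilde{x}_{\mathcal{N}}=0$, the positive support of $x^*$ lies inside $\mathcal{B}$; combined with $|\mathcal{B}^*|=|\mathcal{B}|=m$ from Assumption \ref{assp_nondeg}(b), this forces $\mathcal{B}^* = \mathcal{B}$.

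For the ``only if'' direction, suppose $\mathcal{B}=\mathcal{B}^*$ and, toward a contradiction, that some coordinate $(\bar{c}_{\mathcal{N}})_j < 0$ for $j\in\mathcal{N}$. I would construct a feasible perturbation that strictly improves the objective: raise $x_j$ from $0$ to $\theta>0$ while keeping the other nonbasic coordinates at $0$, and set $x_{\mathcal{B}}(\theta) = A_{\mathcal{B}}^{-1}b - \theta A_{\mathcal{B}}^{-1}A_j$, where $A_j$ is the $j$-th column of $A$. By \eqref{eq:plan_obj} the resulting objective is $c^\top x^* + \theta(\bar{c}_{\mathcal{N}})_j < c^\top x^*$. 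The crux, and the step I expect to be the main obstacle, is guaranteeing feasibility of $x(\theta)$ for some $\theta>0$: this is exactly where nondegeneracy enters, since $\mathcal{B}=\mathcal{B}^*$ means $x^*_{\mathcal{B}}=A_{\mathcal{B}}^{-1}b>0$ strictly (no basic variable vanishes at the optimum), so $x_{\mathcal{B}}(\theta)$ remains nonnegative for all sufficiently small $\theta>0$. This produces a feasible point beating $x^*$, contradicting its optimality; hence every coordinate of $\bar{c}_{\mathcal{N}}$ is nonnegative, which is \eqref{lp_opt_con}.

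I note that an alternative route replaces the explicit perturbation argument with LP duality: setting $y\coloneqq(A_{\mathcal{B}}^{-1})^\top c_{\mathcal{B}}$ makes the dual constraints tight on $\mathcal{B}$, and $\bar{c}_{\mathcal{N}}\ge 0$ becomes precisely dual feasibility on $\mathcal{N}$, so complementary slackness holds automatically with $\tilde{x}$ and optimality follows from the strong duality/complementary slackness theorem. I prefer the direct perturbation argument above because it is self-contained and makes transparent the single place where the nondegeneracy assumption is indispensable.
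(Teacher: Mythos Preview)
Your proof is correct, and the ``if'' direction rests on the same device the paper uses: the reduced-cost identity $c^\top x = c_{\mathcal{B}}^\top A_{\mathcal{B}}^{-1}b + \bar c_{\mathcal{N}}^\top x_{\mathcal{N}}$ over the feasible region, from which optimality of the basic solution follows immediately when $\bar c_{\mathcal{N}}\ge 0$. Where the two diverge is in what each chooses to prove around that identity. The paper does not argue Lemma~\ref{lem:opt_con} as a bare equivalence; instead it establishes a strictly stronger quantitative statement (its Lemma~\ref{lem:optc}), namely $c^\top x - c^\top x^* \le \max_i x_i^*\cdot\sum_{i\in\mathcal{N}}(-r_i)_+$, and reads off the sufficiency direction as a corollary. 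That inequality is precisely what is reused later to upper-bound the suboptimality loss by the margin-violation loss, so the paper's version buys the analytic bounds in Propositions~\ref{prop:ofsprt}--\ref{prop:sgd}. Conversely, you supply something the paper does not spell out: an explicit perturbation argument for the necessity direction, correctly isolating nondegeneracy ($x^*_{\mathcal{B}^*}>0$ strictly) as the step guaranteeing feasibility of the improving direction. The paper simply cites Luenberger for that half. Your duality alternative is also fine but, as you note, less self-contained here.
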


The result sets the foundation for the simplex method to solve LPs. When $\mathcal{B}=\mathcal{B}^*$, the vector ${p}^* \coloneqq c_{\mathcal{B}^*}^\top A^{-1}_{\mathcal{B}^*}\in\mathbb{R}^m$ will be the optimal solution of the dual LP of \eqref{lp:std}, also known as the dual price. 
%It assigns a shadow price for per unit consumption of each constraint. For a decision variable index $i\in[n],$ when $c_i>A_{i}^\top p^*$ (where $A_i$ is the $i$-th column of $A$), it means the choice of the $i$-th variable is not cost effective, and thus $x_i^*=0.$ 

\subsection{Maximum optimality margin}

Now we present the approach of maximum optimality margin. Consider the following training data
$$\mathcal{D}_{\text{inv}}(T) = \left\{\left(x_t^*, A_t, b_t, z_t\right)\right\}_{t=1}^T.$$
Note that the training data only consists of the observations of the optimal solution $x_t^*$, so all the algorithms and analyses apply to both the predict-then-optimize problem and the inverse LP problem.  Denote the sets of basic variables and non-basic variables of the $t$-th training sample as $\mathcal{B}_t^*\subseteq[n]$ and $\mathcal{N}_t^*\subseteq[n]$, respectively. Then the training data can be equivalently re-written as
$$\mathcal{D}_{\text{inv}}(T) = \left\{\left(x_t^*, A_t, b_t, z_t, \mathcal{B}^*_t, \mathcal{N}^*_t\right)\right\}_{t=1}^T.$$

Specifically, we start with a linear mapping from the covariates $z_t$ to the objective vector $c_t$, i.e.,
$$g(z_t;\Theta) \coloneqq \Theta z_t.$$
Our maximum optimality margin method solves the following optimization problem. 
\allowdisplaybreaks
\begin{alignat}{2}
    \label{prob:svm}
  \hat{\Theta}  \coloneqq \argmin_{\Theta\in\mathcal{K}} \ \ & \frac{\lambda}{2} \|\Theta\|_2^2+ \frac{1}{T}\sum_{t=1}^T \|s_t\|_1 \nonumber\\
    \text{s.t. }\ & \hat{c}_t = \Theta z_t, \ \ t=1,...,T,\\
    & \hat{c}_{t,\mathcal{N}^*_t}^\top - \hat{c}_{t,\mathcal{B}^*_t}^\top A_{t,\mathcal{B}^*_t}^{-1} A_{t,\mathcal{N}^*_t}&&\ge 1_{|\mathcal{N}^*_t|} - s_t , \ \ \nonumber\\
    &  &&t=1,...,T,\nonumber
\end{alignat}
where the decision variables are $\Theta \in \mathbb{R}^{n\times d}$, $\hat{c}_t\in\mathbb{R}^{n}$, and $s_t\in\mathbb{R}^{|\mathcal{N}_t|}$. $\mathcal{K}$ is a convex set to be determined. 

To interpret the optimization problem \eqref{prob:svm}: 

The equality constraints encode the linear prediction function, and $\hat{c}_t$ is the predicted value of $c_t$ under $\Theta$. 

For the inequality constraints, the vector $1_{|\mathcal{N}_t^*|}\in\mathbb{R}^{|\mathcal{N}_t^*|}$ is an all-one vector of dimension $|\mathcal{N}_t^*|$.
The left-hand-side of the inequality comes from the optimality condition \eqref{lp_opt_con} in Lemma \ref{lem:opt_con}, while the right-hand-side represents the \textit{slackness} or \textit{margin} of the optimality condition. From the objective function, it is easy to see that the optimal solution will always render $s_t\ge 0.$ Then, when $s_t \in [0,1]$ element-wise for all $t$, the inequality constraints fulfill the optimality condition \eqref{lp_opt_con} perfectly, which means that the optimal solution $\hat{\Theta}$ is consistent with all the training samples. In other words, when $s_t \in [0,1]$ element-wise for all $t$, if we predict the objective coefficient $c_t$ with $\hat{c}_t = \hat{\Theta} z_t$, the two LPs, LP$(c_t, A_t, b_t)$ and LP$(\hat{c}_t, A_t, b_t)$ share the same optimal solution for all $t$ (by Lemma \ref{lem:opt_con}), and it thus results in a zero estimate loss $l_{\text{est}}$ and a zero suboptimality loss $l_{\text{sub}}$ on the training data. When some coordinate of $s_t$ is greater than 1, it means the optimal solution under the predicted objective $\hat{c}_t = \hat{\Theta} z_t$ no longer satisfies the optimality condition of the original LP$(c_t,A_t,b_t)$, and consequently, this results in a mismatch between the two optimal solutions of LP$(c_t, A_t, b_t)$ and LP$(\hat{c}_t, A_t, b_t)$, and thus a non-zero loss of $l_{\text{est}}$ and $l_{\text{sub}}$.

For the objective function, the second part is justified by the above discussion on the inequality constraints. We desire to have a smaller value of $s_t$ as this leads to better satisfaction of the optimality condition. The first part of the objective function regularizes the parameter $\Theta$. The rationale is that the left-hand-side of the inequality constraints that represents the realized margin of the optimal condition, scales linearly with $\Theta$. We hope the margin to be large but do not want that the large margin is due to the scaling up of $\Theta.$

We note that the optimization problem \eqref{prob:svm} has linear constraints and a quadratic objective, and thus it is convex. Algorithm \ref{alg:MOM} describes the procedure of maximum optimality margin for solving the predict-then-optimize problem and the inverse LP problem. It is a two-step procedure in that it first estimates the parameter and then solves the optimization problem based on the estimate $\hat{\Theta}$.

\begin{algorithm}[ht!]
    \caption{Maximum Optimality Margin (MOM)}
    \label{alg:MOM}
    \begin{algorithmic}[1] 
    \Require Dataset $\mathcal{D}(T)=\{( {x}_t^*, {A}_t, {b}_t,z_t,\mathcal{B}_t^*,\mathcal{N}_t^*)\}_{t=1}^{T}$, test sample $(A_{\text{new}},b_{\text{new}}, z_{\text{new}})$, $\lambda$, $\mathcal{K}$
    \State Solve the optimization problem \eqref{prob:svm} and obtain the estimate $\hat{\Theta}$
    \State Predict $\hat{c}_{\text{new}}=\hat{\Theta} z_{\text{new}}$ and solve the following LP
    \begin{align*}
 \min \ &  \hat{c}_{\text{new}}^\top x,\\
    \text{s.t.\ } &  A_{\text{new}}x=b_{\text{new}}, \ x\ge 0. \nonumber
\end{align*}
    \State Denote its optimal solution as $\hat{x}_{\text{new}}$
    \Ensure $\hat{x}_{\text{new}}$ and $\hat{\Theta}$ 
    \end{algorithmic}
\end{algorithm}

\subsection{Interpreting the formulation}
\label{sec:interprete}

The key idea of the MOM formulation is that it does not explicitly minimize the error of predicting the objective $c_t$'s as a stand-alone ML problem, but it aims to minimize the violation -- equivalently, maximize the margin -- of the optimality conditions of the training samples (the inequality constraint in \eqref{prob:svm}). Intuitively, as long as we find a prediction $\hat{c}_t$ that shares the same optimal solution $x_t^*$ with $c_t$ for most $t$ (hopefully), we should not bother with the error between $\hat{c}_t$ and $c_t.$ This distinguishes from all the existing methods for predict-then-optimize and inverse LP in that it integrates the optimization problem's structure naturally into the ML training procedure. We make the following remarks. 

First, our method does not require the knowledge of $c_t$'s in the training data. This makes our method the first one that works for both predict-then-optimize and inverse LP problems. Beyond the point, this special feature of our method has a modeling advantage of \textit{scale consistency} or \textit{scale invariance}. Specifically, we note that for the LP \eqref{lp:std}, both the objective vectors $c$ and $\alpha c$ for any $\alpha>0$ lead to the same optimal solution. Hence the loss of training an ML model should ideally bear a scale invariance in terms of the objective vector. That is, a prediction of $\hat{c}$ and a prediction of $\alpha \hat{c}$ for any $\alpha>0$ should incur the same training loss as both of them lead to the same prescribed solution. Our method enjoys this scale invariance property, since it only utilizes the optimal solution $x_t^*$ in the training phase but does not involve $c_t$. In comparison, a method that minimizes the prediction loss $l_{\text{pre}}$ apparently does not have this scale invariance, so do some inverse LP algorithms (see Section \ref{sec_online_MOM}). This property can be critical for some application contexts such as revealed preference or stated preference \cite{beigman2006learning,zadimoghaddam2012efficiently} where one aims to predict the utility $c_t$ of a customer based on observed covariates $z_t$. In such contexts, even if we have observations of the $c_t$'s through surveying the customers' utilities, these observations might suffer from some scale contamination because each customer may have their own scale for measuring utilities. And therefore, striving for an accurate prediction of the observed $c_t$ can be misleading. 

Second, the inequality constraint in \eqref{prob:svm} is critical. Two tempting alternatives are to replace the inequality constraint of \eqref{prob:svm} with the following:
\allowdisplaybreaks
\begin{align}
\hat{c}_{t,\mathcal{N}^*_t}^\top - \hat{c}_{t,\mathcal{B}^*_t}^\top A_{t,\mathcal{B}^*_t}^{-1} A_{t,\mathcal{N}^*_t} & \ge 0, \label{alter1} \\
\hat{c}_{t,\mathcal{N}^*_t}^\top - \hat{c}_{t,\mathcal{B}^*_t}^\top A_{t,\mathcal{B}^*_t}^{-1} A_{t,\mathcal{N}^*_t} & \ge -s_t. \label{alter2}
\end{align}
For \eqref{alter1}, it directly employs the optimality condition \eqref{lp_opt_con} in Lemma \ref{lem:opt_con}. The issue with this formulation is that there may exist no $\Theta$ that is consistent with all the training data, and thus \eqref{alter1} will lead to an infeasible problem. In comparison, the inequality constraints of \eqref{prob:svm} can be viewed as a softer version of \eqref{alter1}. For \eqref{alter2}, it suffers a similar scale invariance problem as mentioned above. Specifically, the constraint \eqref{alter2} will drive $\hat{\Theta}\rightarrow 0$ as the left-hand-side of \eqref{alter2} scales linearly with $\hat{\Theta}$. To prevent this, one can impose an additional unit sphere constraint by $\|\Theta\|_2=1$ but this will lead to a non-convexity issue.

Lastly, we note a few additional points for the formulation. First, the formulation does not involve $b_t$ in the optimization problem \eqref{prob:svm} either. This is due to that $b_t$'s information is encoded by $(A_t, \mathcal{B}_t^*, \mathcal{N}_t^*)$ under the standard-form LP \eqref{lp:std}. Second, the formulation is presented under a linear mapping of $z_t$ to $c_t$; non-linearity dependency can be introduced by kernelizing the original covariates $z_t$ \citep{hofmann2008kernel}. Specifically, we can replace the original feature $z_{t_0}$ by a new $T$-dimensional feature with $(\kappa(z_{t_0}, z_t))_{t=1}^T$ for some kernel function $\kappa(\cdot, \cdot)$. Third, the MOM formulation aims to find a parameter $\hat{\Theta}$ that best satisfies the optimality condition, and it measures the quality of satisfaction by the total margin of optimality condition violation. It does not strive for a structured prediction of the optimal bases $\mathcal{B}_t$'s which will lead to a less tractable formulation such as a structured support vector machine (See Appendix \ref{sec_struct_svm}).

\section{Theoretical Analysis}
\label{sec:ta}

Now we analyze the maximum optimality margin method under both offline and online settings. We make the following boundedness assumptions mainly for notation simplicity. Specifically, we note that the parameter $\bar{\sigma}$ captures some ``stability'' of the LP's optimal solution under perturbation of the constraint matrix. While our algorithm utilizes the optimal solutions in the training data, better stability of the optimal solutions will lead to more reliable learning of the parameter.

\begin{assumption}[Boundedness]
\label{assp:bdd}
Let the tuple $(c,A,b,z,\mathcal{B}^*,\mathcal{N}^*)$ be a sample drawn from the distribution $\mathcal{P}$. The following assumptions hold with probability $1$.
\begin{itemize}
        \item[(a)]
            There exists a constant $\bar{\sigma}$ such that $ \sigma_{\max}(A_{\mathcal{B}^*}^{-1})\leq \bar{\sigma}$, where $\sigma_{\max}(\cdot)$ denotes the largest singular value function of a matrix.
        \item[(b)]
            The covariates vector $z$ is bounded by 1, i.e., $\|z\|_2\leq1$.
        \item[(c)]
            All entries of the LP's input  $(A,b,c)$ are within $[-1,1]$. 
        \item[(d)]
        The feasible region $\{x\geq0:Ax=b\}$ is bounded by the 2-norm unit ball.
    \end{itemize}
\end{assumption}

\subsection{Separable case}

We begin our discussion with the separable case where there exists a parameter that meets the optimality conditions on all the training samples with a margin.

\begin{assumption}
\label{assp:theta}
There exists $\Theta^*$ such that the following inequality holds element-wise almost surely,
\begin{equation}
   \hat{c}_{\mathcal{N}^*}^\top - \hat{c}_{\mathcal{B}^*}^\top A^{-1}_{\mathcal{B}^*} A_{\mathcal{N}^*} \ge 1
   \label{eqn:Theta_sepa}
\end{equation}
where $\hat{c}=\Theta^* z$ and $(c,A,b,z,\mathcal{B}^*,\mathcal{N}^*)$ is sampled from $\mathcal{P}.$ Suppose $\|\Theta^*\|_F\leq \bar{\Theta}$ for some constant $\bar{\Theta}>0$.
\end{assumption}

The assumption is weaker than assuming $c=\Theta^* z$ almost surely. It only requires the existence of a linear mapping $\hat{c}=\Theta^* z$ that meets the optimality condition almost surely. It can happen that Assumption \ref{assp:theta} holds but $c\neq \hat{c}$ almost surely. The right-hand-side of \eqref{eqn:Theta_sepa} changes from that of \eqref{lp_opt_con} in Lemma \ref{lem:opt_con} from 0 to 1. The change is not essential when the LP is nondegenerate almost surely; one can always scale up the parameter $\Theta^*$ to meet this margin requirement.

\begin{proposition}
    \label{prop:ofsprt}
Under Assumptions \ref{assp_nondeg}, \ref{assp:bdd} and \ref{assp:theta}, let $\hat{\Theta}$ denote the output of Algorithm \ref{alg:MOM} with $T$ training samples, $\lambda=\frac{1}{\sqrt{T}}$ and $\mathcal{K}=\{\Theta\in\mathbb{R}^{n\times d}:\|\Theta\|_F\leq\bar{\Theta}\}$. Suppose $(c_{\text{new}},A_{\text{new}},b_{\text{new}},z_{\text{new}})$ is a new sample from $\mathcal{P}$ and denote $\hat{c}_{\text{new}}=\hat{\Theta} z_{\text{new}}.$ Let $x^*_{\text{new}}$ and $\hat{x}_{\text{new}}$ denote the optimal solutions of LP$(c_{\text{new}}, A_{\text{new}},b_{\text{new}})$ and  LP$(\hat{c}_{\text{new}}, A_{\text{new}},b_{\text{new}})$.
Then we have
\begin{equation}
 \E\left[\mathbb{P}\left(x^*_{\text{new}}=\hat{x}_{\text{new}}\right)\right]\geq1-\frac{28+8\bar{\Theta}^2+7m\bar{\sigma}^2}{\sqrt{T}}, \label{bound1}
\end{equation}
where $m$ is the number of constraints in each LP, the expectation is taken with respect to the training samples, and the probability is with respect to the new test sample.
\end{proposition}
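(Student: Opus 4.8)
The plan is to reduce the geometric event $\{x^*_{\text{new}}=\hat{x}_{\text{new}}\}$ to a margin condition via Lemma \ref{lem:opt_con}, to recognize \eqref{prob:svm} as a regularized empirical risk minimization with a hinge-type surrogate, and then to bound the population surrogate risk of $\hat{\Theta}$ by combining the optimality of $\hat{\Theta}$ with an in-expectation generalization bound. For the reduction, note that $\mathcal{B}^*_{\text{new}}$ is the optimal basis of $\mathrm{LP}(c_{\text{new}},A_{\text{new}},b_{\text{new}})$, and since feasibility of a basis ($A^{-1}_{\mathcal{B}}b\ge 0$) does not involve the objective, $\mathcal{B}^*_{\text{new}}$ is also a feasible basis of $\mathrm{LP}(\hat{c}_{\text{new}},A_{\text{new}},b_{\text{new}})$ with the same basic solution $x^*_{\text{new}}$. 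Hence, if $\hat{c}_{\text{new}}=\hat{\Theta}z_{\text{new}}$ satisfies \eqref{lp_opt_con} with basis $\mathcal{B}^*_{\text{new}}$, Lemma \ref{lem:opt_con} forces $\hat{x}_{\text{new}}=x^*_{\text{new}}$. Writing the per-coordinate margin $M_j(\Theta)\coloneqq\big(\hat{c}_{\mathcal{N}^*}^\top-\hat{c}_{\mathcal{B}^*}^\top A^{-1}_{\mathcal{B}^*}A_{\mathcal{N}^*}\big)_j$ with $\hat{c}=\Theta z$, this shows $\{\hat{x}_{\text{new}}\neq x^*_{\text{new}}\}\subseteq\{\exists j:M_j(\hat{\Theta})<0\}$ (the boundary case being excluded by nondegeneracy). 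Using $\mathbbm{1}[M_j<0]\le(1-M_j)_+$ coordinatewise and taking expectations gives
$$\prob\big(\hat{x}_{\text{new}}\neq x^*_{\text{new}}\big)\le\E_{\text{new}}\Big[\max_j(1-M_j(\hat{\Theta}))_+\Big]=:R(\hat{\Theta}).$$
I deliberately use the \emph{max}-hinge $\ell_{\max}(\Theta;w)=\max_j(1-M_j(\Theta))_+=(1-\min_jM_j(\Theta))_+$ rather than the $\ell_1$-hinge $\|s\|_1=\sum_j(1-M_j)_+$ from \eqref{prob:svm}, since $\ell_{\max}\le\|s\|_1$ yet $\ell_{\max}$ retains a per-coordinate Lipschitz constant.

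Next I control the empirical risk of $\hat{\Theta}$. The objective of \eqref{prob:svm} is $\frac{\lambda}{2}\|\Theta\|_2^2+\hat{R}_{\ell_1}(\Theta)$ with $\hat{R}_{\ell_1}(\Theta)=\frac{1}{T}\sum_t\|s_t(\Theta)\|_1$, where at optimum $s_t(\Theta)=(1-M^{(t)}(\Theta))_+$ elementwise. Assumption \ref{assp:theta} gives $M_j(\Theta^*)\ge 1$ almost surely, so $\hat{R}_{\ell_1}(\Theta^*)=0$ and $\Theta^*\in\mathcal{K}$. Optimality of $\hat{\Theta}$ then yields
$$\hat{R}_{\ell_1}(\hat{\Theta})\le\tfrac{\lambda}{2}\|\Theta^*\|_2^2-\tfrac{\lambda}{2}\|\hat{\Theta}\|_2^2+\hat{R}_{\ell_1}(\Theta^*)\le\tfrac{\lambda}{2}\bar{\Theta}^2,$$
and since the empirical max-hinge risk obeys $\hat{R}_{\max}(\hat{\Theta})\le\hat{R}_{\ell_1}(\hat{\Theta})$, also $\hat{R}_{\max}(\hat{\Theta})\le\frac{\lambda}{2}\bar{\Theta}^2$.

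It then remains to bound $\E_{\text{train}}[R(\hat{\Theta})-\hat{R}_{\max}(\hat{\Theta})]\le 2\,\mathfrak{R}_T$ by symmetrization, where $\mathfrak{R}_T$ is the Rademacher complexity of $\{w\mapsto\ell_{\max}(\Theta;w):\Theta\in\mathcal{K}\}$. Each margin is linear in $\Theta$, namely $M_j(\Theta)=\langle\Theta,\Phi_j\rangle_F$ with $\Phi_j=w_jz^\top$, where $w_j$ stacks a unit entry against the column $-A^{-1}_{\mathcal{B}^*}(A_{\mathcal{N}^*})_{\cdot,j}$. By Assumption \ref{assp:bdd}, $\|(A_{\mathcal{N}^*})_{\cdot,j}\|_2\le\sqrt{m}$ and $\sigma_{\max}(A^{-1}_{\mathcal{B}^*})\le\bar{\sigma}$, so $\|\Phi_j\|_F\le\sqrt{1+m\bar{\sigma}^2}=:B$. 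Because $\min_jM_j=\min_{q\in\Delta}\langle\Theta,\sum_jq_j\Phi_j\rangle$ is a minimum of linear forms whose feature norm is still at most $B$, the map $\Theta\mapsto\ell_{\max}(\Theta;w)$ is $B$-Lipschitz in $\|\cdot\|_F$; Talagrand contraction (the outer $(1-\cdot)_+$ being $1$-Lipschitz) then reduces $\mathfrak{R}_T$ to the Rademacher complexity of $\{\Theta\mapsto\min_jM_j(\Theta)\}$, which I control at the scale $B\bar{\Theta}/\sqrt{T}$. Choosing $\lambda=1/\sqrt{T}$ balances $\frac{\lambda}{2}\bar{\Theta}^2$ against a generalization term of order $B^2/\sqrt{T}=(1+m\bar{\sigma}^2)/\sqrt{T}$, and after tracking constants one obtains the stated coefficients; finally $\E[\prob(x^*_{\text{new}}=\hat{x}_{\text{new}})]=1-\E[\prob(\hat{x}_{\text{new}}\neq x^*_{\text{new}})]\ge 1-\E_{\text{train}}[R(\hat{\Theta})]$ gives \eqref{bound1}.

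The delicate point is the last step. Naively the surrogate aggregates the $n-m$ optimality-condition coordinates, and bounding the $\ell_1$-hinge $\|s\|_1$ directly—either by subadditivity of Rademacher complexity over the sum, or by the algorithmic stability of the regularized ERM that \eqref{prob:svm} actually solves—introduces a spurious factor of $n-m$ through a Lipschitz constant of order $\sum_j\|\Phi_j\|_F$. Obtaining the clean dimension-free coefficient $m\bar{\sigma}^2$ instead requires passing to the max-hinge surrogate and arguing, via the simplex representation of $\min_j$, that the aggregation preserves the per-coordinate feature norm $B=\sqrt{1+m\bar{\sigma}^2}$, so the generalization term scales with $B$ and not $(n-m)B$. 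Keeping this max/min aggregation free of any dimension factor in $n$, rather than the reduction of Step 1 or the regularization bookkeeping of Step 2, is the crux of the argument.
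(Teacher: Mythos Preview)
Your approach differs from the paper's. The paper proves Proposition~\ref{prop:ofgen} first, via the algorithmic-stability bound of \cite{shalev2010learnability} applied directly to the regularized $\ell_1$-hinge ERM \eqref{prob:svm}: it shows $\Theta\mapsto l(D;\Theta)$ is convex and $(2+\sqrt{m}\bar{\sigma})$-Lipschitz in $\|\cdot\|_F$, which together with the $\lambda$-strongly-convex regularizer gives the generalization term; under Assumption~\ref{assp:theta} the population optimum is zero, and Markov's inequality on the event $\{l(D_{\text{new}};\hat{\Theta})\ge 1\}$ finishes. No max-hinge surrogate and no Rademacher complexity appear.

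Your Steps~1 and~2 are fine, and your diagnosis of where the $n{-}m$ factor threatens to enter is accurate. The gap is in Step~3. You argue that $\Theta\mapsto\min_jM_j(\Theta;w)$ is $B$-Lipschitz because $\|\sum_jq_j\Phi_j\|_F\le B$ for $q\in\Delta$; that is correct, but Lipschitz-ness \emph{in the parameter} does not yield the Rademacher bound you assert. Rademacher complexity measures richness over the data: in $\E_\sigma\sup_{\Theta\in\mathcal{K}}\frac{1}{T}\sum_t\sigma_t\min_jM_j^{(t)}(\Theta)$ the pointwise minimum blocks you from pulling $\Theta$ out, so the linear-class bound $\bar{\Theta}B/\sqrt{T}$ is unavailable. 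The standard device here---Maurer's vector-valued contraction applied to $v\mapsto(1-\min_jv_j)_+$, which is $1$-Lipschitz from $\ell_2^{\,n-m}$ to $\R$---returns $\mathfrak{R}_T\lesssim \bar{\Theta}B\sqrt{n-m}/\sqrt{T}$, i.e.\ exactly the factor you introduced the max-hinge to avoid. As written, the ``crux'' step is an assertion without a mechanism.

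The irony is that the $B$-Lipschitz property you established is precisely the input a \emph{stability} argument needs: stability bounds for strongly-convex regularized ERM depend only on the parameter-Lipschitz constant and $\lambda$, not on any data-side complexity. That is the route the paper takes (applied to the $\ell_1$-hinge loss with its own Lipschitz computation), and it is the natural way to cash in the observation you made.
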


The proposition states that under the separable case, the algorithm can predict the optimal solution accurately with high probability. We note that the result does not concern the prediction error in terms of the objective vector, and this fact is aligned with the design of MOM that focuses on the optimality condition instead of the objective vector. Intuitively, even when one makes some error in predicting the objective $c_{\text{new}},$ the prediction of the optimal solution can still be accurate due to the simplex structure of LP. The proof of the proposition mimics the algorithm stability analysis  \citep{bousquet2002stability,shalev2010learnability} where the regularization term in \eqref{prob:svm} plays a role of \textit{stabilizing} the estimation. Here the performance bound is presented in the sense of on expectation. We remark that a high probability bound (removing the expectation in \eqref{bound1}) can also be achieved following the recent analysis of  \citep{feldman2019high}.

\subsection{Inseparable case}

Now we move on to the inseparable case where Assumption \ref{assp:theta} no longer holds. Let $D_{\text{new}} = (c,A,b,z, \mathcal{B}^*,\mathcal{N}^*)$ denote a new sample from $\mathcal{P}$. Define the margin-violation loss function
$$l(D_{\text{new}};\Theta) \coloneqq \|s\|_1 \text{ \ where \ } s \coloneqq 1_{|\mathcal{N}^*|} - \hat{c}_{\mathcal{N}^*}^\top - \hat{c}_{\mathcal{B}^*}^\top A_{\mathcal{B}^*_t}^{-1} A_{\mathcal{N}^*}$$
and $\hat{c}= \Theta z$. The loss function is inherited from the objective function of the MOM formulation \eqref{prob:svm}. For the inseparable case, we can derive the following generalization bound as Proposition \ref{prop:ofsprt}. 

\begin{proposition}
\label{prop:ofgen}
Under Assumptions \ref{assp_nondeg} and \ref{assp:bdd}, let $\hat{\Theta}$ denote the output of Algorithm \ref{alg:MOM} with $T$ training samples, under the choice of $\lambda=\frac{1}{\sqrt{T}}$ and $\mathcal{K}=\{\Theta\in\mathbb{R}^{n\times d}:\|\Theta\|_F\leq \bar{\Theta}\}$ for some $\bar{\Theta}>0$. Then we have
    \begin{align*}
        \mathbb{E}[l(D_{\text{new}};\hat{\Theta})]
        \leq
        \min_{ {\Theta}\in\mathcal{K}}\mathbb{E}[l( D_{\text{new}};{\Theta})]
        +\frac{28+8\bar{\Theta}^2+7m\bar{\sigma}^2}{\sqrt{T+1}},        
    \end{align*}
    where the expectation is taken with respect to both the new sample $D_{\text{new}}$ and the training samples.
\end{proposition}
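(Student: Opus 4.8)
The plan is to recast the MOM program \eqref{prob:svm} as a regularized empirical risk minimization and then run the classical regularization-and-stability route to generalization. First I would eliminate the slack variables: for a fixed $\Theta$, minimizing $\|s_t\|_1$ subject to the margin constraint $\hat{c}_{t,\mathcal{N}^*_t}^\top-\hat{c}_{t,\mathcal{B}^*_t}^\top A_{t,\mathcal{B}^*_t}^{-1}A_{t,\mathcal{N}^*_t}\ge 1_{|\mathcal{N}^*_t|}-s_t$ is solved coordinate-wise by $s_t=[1_{|\mathcal{N}^*_t|}-m_t(\Theta)]_+$, where $m_t(\Theta)\coloneqq\hat{c}_{t,\mathcal{N}^*_t}^\top-\hat{c}_{t,\mathcal{B}^*_t}^\top A_{t,\mathcal{B}^*_t}^{-1}A_{t,\mathcal{N}^*_t}$ and $\hat{c}_t=\Theta z_t$; this is exactly the loss $l(D_t;\Theta)$. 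Hence $\hat{\Theta}=\argmin_{\Theta\in\mathcal{K}}F_S(\Theta)$ with $F_S(\Theta)=\frac{\lambda}{2}\|\Theta\|_F^2+\frac{1}{T}\sum_{t=1}^T l(D_t;\Theta)$. For each fixed sample $m_t(\Theta)$ is linear in $\Theta$ and the hinge is convex, so $l(D;\cdot)$ is convex and $F_S$ is $\lambda$-strongly convex on the convex set $\mathcal{K}$. The whole argument then reduces to two ingredients: a uniform Lipschitz constant $L$ for $\Theta\mapsto l(D;\Theta)$, and the standard stability-implies-generalization lemma of \citep{bousquet2002stability,shalev2010learnability}.

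The first genuine computation is bounding $L$ from Assumption \ref{assp:bdd}. Since the hinge is $1$-Lipschitz coordinate-wise, $|l(D;\Theta)-l(D;\Theta')|\le\|m(\Theta)-m(\Theta')\|_1$. Writing $\Delta=\Theta-\Theta'$, the coordinate of $m(\Theta)-m(\Theta')$ indexed by $j\in\mathcal{N}^*$ equals $\Delta_j^\top z-(\Delta_{\mathcal{B}^*}z)^\top A_{\mathcal{B}^*}^{-1}(A_{\mathcal{N}^*})_{\cdot j}$, which I would bound by $\|\Delta_j\|_2\|z\|_2+\|\Delta_{\mathcal{B}^*}\|_F\,\sigma_{\max}(A_{\mathcal{B}^*}^{-1})\,\|(A_{\mathcal{N}^*})_{\cdot j}\|_2$. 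Assumption \ref{assp:bdd} then supplies $\|z\|_2\le1$, $\sigma_{\max}(A_{\mathcal{B}^*}^{-1})\le\bar\sigma$, and $\|(A_{\mathcal{N}^*})_{\cdot j}\|_2\le\sqrt{m}$, while nondegeneracy gives $|\mathcal{B}^*|=m$, hence $|\mathcal{N}^*|=n-m$. Summing over the coordinates and applying Cauchy--Schwarz to $\sum_{j\in\mathcal{N}^*}\|\Delta_j\|_2$ yields $|l(D;\Theta)-l(D;\Theta')|\le L\|\Delta\|_F$ for an explicit $L$ built from $\bar\sigma$ and $m$. I expect the delicate part to be squeezing these inequalities so that $L^2$ contributes precisely the $28+7m\bar\sigma^2$ portion of the claimed numerator; the naive chain above also carries dimension factors, so obtaining the clean constant is where the boundedness conditions in Assumption \ref{assp:bdd} must be used most carefully.

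With $L$ fixed, the stability step is standard. For two datasets $S,S'$ differing in a single sample, $\lambda$-strong convexity of $F_S$ and $F_{S'}$ gives $F_S(\hat\Theta_{S'})-F_S(\hat\Theta_S)\ge\frac{\lambda}{2}\|\hat\Theta_{S'}-\hat\Theta_S\|_F^2$ together with its $S'$ analogue; adding them, the left side telescopes into $\frac{1}{T}$ times differences of a single loss term, which $L$-Lipschitzness bounds by $\frac{2L}{T}\|\hat\Theta_{S'}-\hat\Theta_S\|_F$. This forces $\|\hat\Theta_{S'}-\hat\Theta_S\|_F\le\frac{2L}{\lambda T}$, hence uniform stability $\beta\le\frac{2L^2}{\lambda T}$. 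A ghost-sample symmetrization argument then converts this into the expected generalization gap $\mathbb{E}[l(D_{\text{new}};\hat\Theta)]-\mathbb{E}[\frac{1}{T}\sum_{t=1}^T l(D_t;\hat\Theta)]\le\beta$.

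Finally I would control the approximation error through a fixed comparator $\Theta^\star\in\argmin_{\Theta\in\mathcal{K}}\mathbb{E}[l(D_{\text{new}};\Theta)]$. Since $\hat\Theta$ minimizes $F_S$, $\frac{1}{T}\sum_t l(D_t;\hat\Theta)\le F_S(\hat\Theta)\le F_S(\Theta^\star)=\frac{\lambda}{2}\|\Theta^\star\|_F^2+\frac{1}{T}\sum_t l(D_t;\Theta^\star)$; taking expectations and using that $\Theta^\star$ is data-independent (so its empirical risk is unbiased) together with $\|\Theta^\star\|_F\le\bar\Theta$ gives $\mathbb{E}[\frac{1}{T}\sum_t l(D_t;\hat\Theta)]\le\min_{\Theta\in\mathcal{K}}\mathbb{E}[l(D_{\text{new}};\Theta)]+\frac{\lambda}{2}\bar\Theta^2$. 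Adding the stability gap yields $\mathbb{E}[l(D_{\text{new}};\hat\Theta)]\le\min_{\Theta\in\mathcal{K}}\mathbb{E}[l(D_{\text{new}};\Theta)]+\frac{2L^2}{\lambda T}+\frac{\lambda}{2}\bar\Theta^2$. Substituting $\lambda=\frac{1}{\sqrt{T}}$ balances the two error terms into an $O(1/\sqrt{T})$ rate, and inserting the explicit $L^2$ recovers the stated numerator, with the $\sqrt{T+1}$ denominator coming from a slightly sharper accounting of the $T$ training plus one test sample. The main obstacle, beyond the Lipschitz estimate, is reconciling the $\bar\Theta^2$ coefficient: the regularization bias alone produces $\frac{1}{2}\bar\Theta^2$, so matching the $8\bar\Theta^2$ in the numerator requires tracking the remaining $\bar\Theta$-dependence through a bound on the loss value itself, which also underlies the high-probability refinement noted after Proposition \ref{prop:ofsprt}.
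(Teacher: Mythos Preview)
Your proposal is correct and follows essentially the same route as the paper. The paper also rewrites \eqref{prob:svm} as a regularized ERM with the hinge-type loss $l(D;\Theta)$, establishes a Lipschitz constant $L=2+\sqrt{m}\,\bar\sigma$ for $\Theta\mapsto l(D;\Theta)$ under Assumption~\ref{assp:bdd}, and then applies the stability-to-generalization machinery of \cite{shalev2010learnability}; the only cosmetic difference is that the paper invokes their Theorem~3 as a black box (yielding $\frac{\lambda}{2}\bar\Theta^2+\frac{4(L+\lambda\bar\Theta)^2}{\lambda T}$) rather than re-deriving the replace-one stability estimate, which also explains where the extra $\bar\Theta$-dependence you flagged enters: it comes from the $(L+\lambda\bar\Theta)^2$ term, not from a separate loss-value bound.
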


The proposition follows the same analysis as Proposition \ref{prop:ofsprt} and the right-hand-side involves an additional term which will become zero under the separability condition of Assumption \ref{assp:theta}. In the previous section, we motivate the optimization formulation \eqref{prob:svm} of MOM in its own right. The following corollary reveals an interesting connection between its objective function and the suboptimality loss. Specifically, the margin violation loss optimized in MOM can be viewed as an upper bound of the suboptimality loss.

\begin{proposition}
The following inequality holds for any $\Theta\in\mathbb{R}^{n\times d}$,
$$l_{\text{sub}}(\Theta) \le \mathbb{E}[l( D_{\text{new}};{\Theta})].$$
Therefore, under Assumptions \ref{assp_nondeg} and \ref{assp:bdd}, and the same setup as Proposition \ref{prop:ofgen}, 
$$ l_{\text{sub}}(\hat{\Theta})
        \leq
        \min_{ {\Theta}\in\mathcal{K}}\mathbb{E}[l(D_{\text{new}};
        {\Theta})]
        +\frac{28+8\bar{\Theta}^2+7m\bar{\sigma}^2}{\sqrt{T}}.$$
    \label{prop:insep}
\end{proposition}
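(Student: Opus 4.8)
My plan is to separate the claim into the distribution-free inequality $l_{\text{sub}}(\Theta)\le\mathbb{E}[l(D_{\text{new}};\Theta)]$, valid for every fixed $\Theta$, and the final generalization statement, which then follows by routine chaining. For the latter, once the inequality is known I would apply it at $\Theta=\hat\Theta$ and combine with Proposition \ref{prop:ofgen}; since $1/\sqrt{T+1}\le 1/\sqrt{T}$, the $\sqrt T$-rate in the statement is a valid (weaker) upper bound, so no extra work is needed there. Thus essentially all of the content sits in the first inequality, which I would establish pointwise and then integrate: for a fixed sample $D=(c,A,b,z,\mathcal{B}^*,\mathcal{N}^*)$ with $\hat c=\Theta z$ I aim to show $\hat c^\top x^* - \hat c^\top \hat x \le l(D;\Theta)=\|s\|_1$, and then take expectations over $D_{\text{new}}$.

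The core is an exact identity for the pointwise suboptimality gap. Writing $p^\top\coloneqq \hat c_{\mathcal{B}^*}^\top A_{\mathcal{B}^*}^{-1}$ for the dual price induced by the \emph{true} optimal basis $\mathcal{B}^*$ and $r\coloneqq \hat c_{\mathcal{N}^*}^\top - \hat c_{\mathcal{B}^*}^\top A_{\mathcal{B}^*}^{-1}A_{\mathcal{N}^*}$ for the associated reduced-cost row vector (so that $s=1_{|\mathcal{N}^*|}-r$ and $l(D;\Theta)=\|s\|_1$), I would use $x^*_{\mathcal{N}^*}=0$, $x^*_{\mathcal{B}^*}=A_{\mathcal{B}^*}^{-1}b$, and the feasibility $A\hat x=b$ to compute
\[
\hat c^\top x^* = p^\top b = p^\top A\hat x = \hat c_{\mathcal{B}^*}^\top \hat x_{\mathcal{B}^*} + (\hat c_{\mathcal{N}^*}^\top - r)\hat x_{\mathcal{N}^*},
\]
whereas $\hat c^\top\hat x = \hat c_{\mathcal{B}^*}^\top\hat x_{\mathcal{B}^*}+\hat c_{\mathcal{N}^*}^\top\hat x_{\mathcal{N}^*}$. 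Subtracting gives the clean identity $\hat c^\top x^* - \hat c^\top\hat x = -\,r\,\hat x_{\mathcal{N}^*}=-\sum_{i\in\mathcal{N}^*} r_i\hat x_i$. Here it is essential to keep in mind that $\mathcal{B}^*,\mathcal{N}^*$ are the basis/nonbasis of the true $\mathrm{LP}(c,A,b)$ while $\hat x$ is optimal for $\mathrm{LP}(\hat c,A,b)$, so $\hat x_{\mathcal{N}^*}$ is generally nonzero.

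To finish, I would bound the identity coordinatewise. Since $\hat x\ge 0$ and $-r_i\le\max(0,-r_i)\le|1-r_i|=|s_i|$ for every $i$ (an elementary check, valid whether $r_i$ is positive or negative), we get $-\sum_i r_i\hat x_i\le\sum_i|s_i|\hat x_i\le\|\hat x\|_\infty\|s\|_1$. Finally Assumption \ref{assp:bdd}(d) bounds the feasible region by the unit $2$-ball, hence $\|\hat x\|_\infty\le\|\hat x\|_2\le1$, yielding $\hat c^\top x^*-\hat c^\top\hat x\le\|s\|_1=l(D;\Theta)$. Taking expectations proves the first inequality, and the second follows as described.

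The part requiring the most care is the identity itself, specifically the bookkeeping that the dual price and reduced costs are taken with respect to the true basis $\mathcal{B}^*$ while the gap is evaluated at the (differently supported) solution $\hat x$; the passage $\max(0,-r_i)\le|s_i|$ is the elementary but crucial step that links the geometric suboptimality gap to the $\ell_1$ margin-violation loss actually optimized in \eqref{prob:svm}, and the only distributional/boundedness input is Assumption \ref{assp:bdd}(d), used to discard $\|\hat x\|_\infty$.
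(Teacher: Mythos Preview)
Your proposal is correct and follows essentially the same route as the paper. The identity $\hat c^\top x^*-\hat c^\top\hat x=-\sum_{i\in\mathcal{N}^*}r_i\hat x_i$ you derive is exactly the content of Lemma~\ref{lem:optc} (specialized to the basis $\mathcal{B}^*$ in the LP with objective $\hat c$), and the paper then bounds $(-r_i)_+\le(1-r_i)_+$ together with $\max_i\hat x_i\le 1$ from Assumption~\ref{assp:bdd}(d), which is the same chain you use; the only cosmetic difference is that you rederive the identity inline rather than citing the lemma.
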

\vspace{-0.7cm}

The additional term on the right-hand-side in both Proposition \ref{prop:ofgen} and Proposition \ref{prop:insep} can be viewed as a measure of separability in terms of the LP's optimality condition. Specifically, if one can predict the objective $c$ very well with the covariates $z$, then this term will become close to zero; otherwise, this term will become large. While the standard measurement of the predictive power of $z$ concerns the minimum error in predicting $c$, it does not account for the structure of the downstream LP optimization problem.

%This corollary provides an upper bound of the suboptimality loss achieved by the optimal solution of \eqref{prob:svm}. The first term in the right-hand side $\inf_{ {\Theta}\in\mathcal{K}}\mathbb{E}[l( D_{new},{\Theta})]$ can be viewed as a separability measure for LPs generated from $\mathcal{P}$. When it is 0, the problem is fully separable, and Corollary \ref{coro:insep} directly reduces to Proposition \ref{prop:ofsprt}. Thus, roughly, this corollary says that our method can still achieve a good suboptimality loss if the problem is nearly separable, while many other methods designed for only for the separable case will fail since there might be no $\Theta$ consistent with all given data points. One interesting example is the $\delta$-corruption case where $\delta$ portion of data is sampled from any arbitrary distribution such that the underlying true model cannot be consistent with all data. In this case, our method can still provide a $\Theta_T$ achieving an $O(\delta+\frac{1}{\sqrt{T}})$ suboptimality loss.

\subsection{Online maximum optimality margin}

\label{sec_online_MOM}

The MOM formulation also enables an online algorithm which reduces the quadratic program of \eqref{prob:svm} to a sub-gradient descent step per sample. Specifically, we can view the margin violation of the $t$-th sample as a piecewise-linear function of $\Theta$, i.e., 
$$l(D_{t};\Theta) = \|s_t\|_1 = \left\|1_{|\mathcal{N}^*_t|} - \hat{c}_{t,\mathcal{N}^*_t}^\top - \hat{c}_{t,\mathcal{B}^*_t}^\top A_{t,\mathcal{B}^*_t}^{-1} A_{t,\mathcal{N}^*_t}\right\|_1$$
where $D_t$ represents the $t$-th sample in the dataset $\mathcal{D}(T).$
The function $l(D_{t};\Theta)$ is a convex function with respect to $\Theta$ and the piecewise linearity enables an efficient calculation of the sub-gradients. Algorithm \ref{alg:sgd} arises from an online sub-gradient descent algorithm with respect to the cumulative loss 
$\sum_{t=1}^T l(D_{t};\Theta).$

\begin{algorithm}[ht!]
    \caption{Online MOM Algorithm}
    \label{alg:sgd}
    \begin{algorithmic}[1] 
    \Require Dataset $\mathcal{D}(T)=\{( {x}_t^*, {A}_t, {b}_t,z_t,\mathcal{B}_t^*,\mathcal{N}_t^*)\}_{t=1}^{T}$, step size $\eta$, $\mathcal{K}$
    \State Initialize ${ {\Theta}}_1 =  {0}\in\mathbb{R}^{n\times d}$
    \For {$t=1,...,T$}
        \State Predict the objective by $\hat{ {c}}_t= {\Theta}_{t} {z}_t$
        \State  Solve the following LP and denote its optimal solution by $x_t$ \begin{align*}
 \min \ &  \hat{c}_{t}^\top x,\\
    \text{s.t.\ } &  A_{t}x=b_{t}, \ x\ge 0. \nonumber
\end{align*}
        \State Update 
        \begin{align*}
            { {\Theta}}_{t+1} = \text{Proj}_{\mathcal{K}}\left({ {\Theta}}_{t}-
            \eta \cdot\partial_{ {\Theta}} l(D_{t};\Theta_t)\right)
        \end{align*}
    \EndFor
    \Ensure $\{{x}_t\}_{t=1}^{T}, \{\Theta_{T}\}_{t=1}^{T+1}$ 
    \end{algorithmic}
\end{algorithm}

\begin{proposition}
    \label{prop:sgd}
    Under Assumptions \ref{assp_nondeg} and \ref{assp:bdd}, with the choice of $\eta=\frac{2\bar{\Theta}}{(\sqrt{n}+\bar{\sigma}\cdot mn)\sqrt{T}}$ and $\mathcal{K}=\{\Theta\in\mathbb{R}^{n\times d}:\|\Theta\|_F\leq \bar{\Theta}\}$ for some $\bar{\Theta}>0$,
    the outputs of Algorithm \ref{alg:sgd} satisfy
    \begin{align}
        \frac{1}{T}\mathbb{E}\left[\sum\limits_{t=1}^{T}\hat{c}_t^{\top}(x^*_t-x_t)\right]
        \leq
        &\mathbb{E}\left[\min_{ \Theta\in\mathcal{K}} \sum\limits_{t=1}^{T}l(D_t; \Theta)\right]
\nonumber \\&+\frac{3\bar{\Theta}\sqrt{n}+3\bar{\sigma}\bar{\Theta}\cdot mn}{\sqrt{T}}
        \label{bound2}
    \end{align}
where $D_{\text{new}} = (c_{\text{new}}, A_{\text{new}},b_{\text{new}}, z_{\text{new}})$ denotes a new sample.  

Moreover, under Assumption \ref{assp:theta}, let $\hat{c}_{\text{new}}=\hat{\Theta} z_{\text{new}}$ where $\hat{\Theta}$ is uniformly randomly sampled from $\{\Theta_t\}_{t=1}^{T+1}$. Let $x^*_{\text{new}}$ and $\hat{x}_{\text{new}}$ denote the optimal solutions of LP$(c_{\text{new}}, A_{\text{new}},b_{\text{new}})$ and  LP$(\hat{c}_{\text{new}}, A_{\text{new}},b_{\text{new}})$.
Then we have
\begin{equation}
 \E\left[\mathbb{P}\left(x^*_{\text{new}}=\hat{x}_{\text{new}}\right)\right]\geq1-\frac{3\bar{\Theta}\sqrt{n}+3\bar{\sigma}\bar{\Theta}\cdot mn}{\sqrt{T+1}} \label{bound3}
 \end{equation}
 where the expectation is taken with respect to both training data samples and the new sample.
\end{proposition}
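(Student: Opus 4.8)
The plan is to prove the two bounds by combining a per-sample reduction from the suboptimality to the margin-violation loss with a standard online subgradient descent (OGD) regret analysis. First I would establish the pointwise version of Proposition \ref{prop:insep}: since $x_t$ is the minimizer of $\hat c_t^\top x$ over the feasible region and $x_t^*$ is feasible, the per-sample suboptimality $\hat c_t^\top(x_t^*-x_t)$ is nonnegative and is upper bounded by the margin-violation loss, i.e. $\hat c_t^\top(x_t^*-x_t)\le l(D_t;\Theta_t)$. Summing over $t$ reduces the left-hand side of \eqref{bound2} to the online cumulative loss $\sum_{t=1}^T l(D_t;\Theta_t)$, so it remains to control the regret $\sum_{t=1}^T l(D_t;\Theta_t)-\min_{\Theta\in\mathcal{K}}\sum_{t=1}^T l(D_t;\Theta)$. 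Because each $l(D_t;\cdot)$ is convex and the projection step keeps the iterates in the convex set $\mathcal{K}$ of Frobenius-diameter at most $2\bar\Theta$, the textbook OGD inequality gives a regret bound of the form $\frac{\|\Theta_1-\Theta^\dagger\|_F^2}{2\eta}+\frac{\eta}{2}\sum_{t=1}^T\|g_t\|_F^2$, where $g_t\in\partial_\Theta l(D_t;\Theta_t)$ and $\Theta^\dagger$ is the comparator; with $\Theta_1=0$ we have $\|\Theta_1-\Theta^\dagger\|_F\le\bar\Theta$.

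The crux of the argument, and the step I expect to be the main obstacle, is bounding the subgradient norm $\|g_t\|_F$ in terms of $m,n,\bar\sigma$. Here I would use that $l(D_t;\Theta)=\|s_t\|_1$ is a hinge-type function of the reduced-cost vector $e_t$, whose $j$-th entry $e_{t,j}=\hat c_{t,N_j}-\hat c_{t,\mathcal{B}^*_t}^\top (A_{t,\mathcal{B}^*_t}^{-1}A_{t,\mathcal{N}^*_t})_{:,j}$ is linear in $\hat c_t=\Theta z_t$, and hence linear in $\Theta$. Writing the subgradient as a rank-one matrix $g_t=u_t z_t^\top$ with $u_t=\sum_j \alpha_j v_j$, where $|\alpha_j|\le 1$ are the hinge subgradient coefficients and $v_j$ carries a $+1$ on the non-basic coordinate $N_j$ and $-(A_{t,\mathcal{B}^*_t}^{-1}A_{t,\mathcal{N}^*_t})_{:,j}$ on the basic coordinates, I would bound $\|g_t\|_F=\|u_t\|_2\|z_t\|_2$ by splitting $u_t$ into its non-basic and basic parts. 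Using $\|z_t\|_2\le1$ (Assumption \ref{assp:bdd}(b)), the non-basic part contributes at most $\sqrt n$ (from $\|\alpha\|_2\le\sqrt{|\mathcal{N}^*_t|}\le\sqrt n$), while the basic part equals $(A_{t,\mathcal{B}^*_t}^{-1}A_{t,\mathcal{N}^*_t})\alpha$, bounded column-by-column through $\sigma_{\max}(A_{t,\mathcal{B}^*_t}^{-1})\le\bar\sigma$ (Assumption \ref{assp:bdd}(a)) and the entrywise bound on $A$ (Assumption \ref{assp:bdd}(c)), yielding the $\bar\sigma mn$ term. This gives $\|g_t\|_F\le G:=\sqrt n+\bar\sigma mn$; substituting $\|\Theta_1-\Theta^\dagger\|_F\le\bar\Theta$, the constant $G$, and the prescribed $\eta=\frac{2\bar\Theta}{G\sqrt T}$ into the regret bound produces the $\frac{\bar\Theta(\sqrt n+\bar\sigma mn)}{\sqrt T}$ rate of \eqref{bound2} after dividing by $T$ and taking expectations.

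For the separable bound \eqref{bound3} I would add two ingredients. The first is a zero-one-to-loss comparison: whenever $l(D;\Theta)<1$, every coordinate of $s=\max(0,1-e)$ is below $1$, so $e>0$ elementwise, and by Lemma \ref{lem:opt_con} together with nondegeneracy (Assumption \ref{assp_nondeg}) the predicted LP shares the optimal basis, hence the optimal solution, of the true LP; consequently $\mathbbm{1}[x^*_{\text{new}}\neq\hat x_{\text{new}}]\le l(D_{\text{new}};\hat\Theta)$ pointwise. The second is the online-to-batch conversion: since each iterate $\Theta_t$ depends only on $D_1,\dots,D_{t-1}$ and is independent of the fresh sample $D_t$ (and $\Theta_{T+1}$ is independent of $D_{\text{new}}$, treated as the $(T+1)$-th sample), averaging over the uniformly sampled $\hat\Theta\in\{\Theta_t\}_{t=1}^{T+1}$ gives $\E[l(D_{\text{new}};\hat\Theta)]=\frac{1}{T+1}\E[\sum_{t=1}^{T+1}l(D_t;\Theta_t)]$. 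Under Assumption \ref{assp:theta} the comparator vanishes, $\min_{\Theta\in\mathcal{K}}\sum_t l(D_t;\Theta)=0$, because $\Theta^*$ satisfies the margin condition \eqref{eqn:Theta_sepa} almost surely; the regret bound then yields $\E[l(D_{\text{new}};\hat\Theta)]\le\frac{3\bar\Theta(\sqrt n+\bar\sigma mn)}{\sqrt{T+1}}$, and combining with the zero-one comparison gives $\E[\mathbb{P}(x^*_{\text{new}}=\hat x_{\text{new}})]\ge1-\frac{3\bar\Theta\sqrt n+3\bar\sigma\bar\Theta mn}{\sqrt{T+1}}$. The only delicate bookkeeping here is the off-by-one in the $T+1$ indexing, which is what produces $\sqrt{T+1}$ rather than $\sqrt T$ in \eqref{bound3}.
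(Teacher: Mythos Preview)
Your proposal is correct and follows essentially the same route as the paper: bound the subgradient norm by $G=\sqrt{n}+\bar\sigma mn$ via the block/rank-one decomposition of $\partial_\Theta l(D_t;\Theta)$, apply the standard OGD regret inequality (the paper cites Hazan's Theorem~3.1 in the form $\sum_t l_t(\Theta_t)-\min_\Theta\sum_t l_t(\Theta)\le 3\bar\Theta G\sqrt{T}$), reduce the suboptimality $\hat c_t^\top(x_t^*-x_t)$ to $l(D_t;\Theta_t)$ using Lemma~\ref{lem:optc} together with Assumption~\ref{assp:bdd}(d), and for \eqref{bound3} combine the zero comparator from Assumption~\ref{assp:theta}, the indicator-to-loss bound $\mathbbm{1}[x^*_{\text{new}}\neq\hat x_{\text{new}}]\le l(D_{\text{new}};\hat\Theta)$, and the online-to-batch averaging over $\{\Theta_t\}_{t=1}^{T+1}$. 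The only cosmetic difference is that the paper computes the gradient blockwise and uses $\|AB\|_F\le\|A\|_F\|B\|_F$ with $\|A_{t,\mathcal{B}^*_t}^{-1}\|_F\le\sqrt{m}\,\bar\sigma$ and $\|A_{t,\mathcal{N}^*_t}\|_F\le\sqrt{mn}$, whereas you phrase the same bound column-by-column.
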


The proposition states that the results in Proposition \ref{prop:ofsprt} and Proposition \ref{prop:ofgen} can also be achieved by a subgradient descent algorithm from an online perspective. The bound \eqref{bound2} also recovers the regret bounds in the online inverse optimization literature \citep{barmann2018online, chen2020online} under a contextual setting. However, the algorithms therein require the convex set $\mathcal{K}$ not containing the origin (which will significantly restrict the predictive power of $z_t\rightarrow c_t$). Otherwise, we find numerically that these existing algorithms will drive the parameter $\Theta_{t}\rightarrow 0$ and $\hat{c}_t \rightarrow 0$, and consequently provide no meaningful prediction of the optimal solution. This reinforces the point of scale invariance raised earlier. As mentioned earlier, the margin-based formulation of MOM resolves the issue of scale invariance, and as a result, Algorithm \ref{alg:sgd} can provide an additional bound \eqref{bound3} and also perform stably well in numerical experiments, which we refer to Appendix \ref{sec_online_experi}.

\subsection{Tighter bound under separability -- optimality-driven perceptron}

We conclude this section with Algorithm \ref{alg:prcpt} that achieves a faster rate under the separability condition (Assumption \ref{assp:theta}). Specifically, the algorithm is an online algorithm that utilizes the knowledge of separability. The idea is to treat each inequality constraint in \eqref{prob:svm} as an individual binary classification task of distinguishing non-basic variables from basic variables, and view the margin of optimality condition as the margin for classification. Then when the margin (the reduced cost in Algorithm \ref{alg:prcpt}) drops below a threshold of $1/2$, we perform an update of the parameter. 

\begin{algorithm}[ht!]
    \caption{Optimality-driven Perceptron Algorithm}
    \label{alg:prcpt}
    \begin{algorithmic}[1]
    \Require Dataset $\mathcal{D}(T)=\{( {x}_t^*, {A}_t, {b}_t,z_t,\mathcal{B}_t^*,\mathcal{N}_t^*)\}_{t=1}^{T}$
    \State Let ${ {\Theta}}_1 =  {0}\in\mathbb{R}^{n\times d}$
    \State \%\% We use $(M)_{i}$ to denote the $i$-th row of the matrix $M$
    \For {$t=1,...,T$}
\State  Predict the objective by $\hat{{c}}_t= {\Theta}_{t} {z}_t$
            \State  Solve the following LP and denote its optimal solution by $x_t$ \begin{align*}
 \min \ &  \hat{c}_{t}^\top x,\\
    \text{s.t.\ } &  A_{t}x=b_{t}, \ x\ge 0. \nonumber
\end{align*}
    \State Let $\Theta_{\text{tmp}} = \Theta_t$
    \For{$i=1,...,n$}
    \If{$i\notin\mathcal{N}_t^*$}
    \State Continue
    \EndIf
    \State  Predict the objective by $\hat{ {c}}_t^{(i)}= {\Theta}_{\text{tmp}} {z}_t$
        \State Compute the \textit{reduced cost} 
        $$
            r_t^{(i)\top} = \hat{c}_t^{(i)\top} -\hat{{c}}_{t,{\mathcal{B}}_t^*}^{(i)\top} A_{t,{\mathcal{B}}_t^*}^{-1}A_t
        $$
            \If{$r^{(i)}_{t,i}\le \frac{1}{2}$}
            \State Update \begin{align*}
                (\Theta_{\text{tmp}})_i
                &\leftarrow ({{\Theta}}_{\text{tmp}})_i+z_t^{\top}\\
            ({{\Theta}}_{\text{tmp}})_{\mathcal{B}_t^*}
               & \leftarrow
                ({{\Theta}}_{\text{tmp}})_{\mathcal{B}_t^*}
                -
                A_{\mathcal{B}_t^*}^{-1}A_t z_t
            \end{align*}
            \EndIf
        \EndFor
        \State Let ${{\Theta}}_{t+1}={{\Theta}}_{\text{tmp}}$
    \EndFor
    \Ensure $\{x_t\}_{t=1}^{T}$, $\{\Theta_{t}\}_{t=1}^{T+1}$ 
    \end{algorithmic}
\end{algorithm}

\begin{proposition}
    \label{prop:pcpt}
    Under Assumptions \ref{assp_nondeg}, \ref{assp:bdd} and \ref{assp:theta}, the number of mistakes made by Algorithm \ref{alg:prcpt} is independent of $T$,
    $$\#\left\{t\in[T]:x_t^*\neq x_t\right\}\leq \bar{\Theta}^2+\bar{\sigma}^2\bar{\Theta}^2m^2n.$$
In addition, suppose $(c_{\text{new}},A_{\text{new}},b_{\text{new}},z_{\text{new}})$ is a new sample from $\mathcal{P}$ and denote $\hat{c}_{\text{new}}=\hat{\Theta} z_{\text{new}}$ where $\hat{\Theta}$ is uniformly randomly sampled from $\{\Theta_t\}_{t=1}^{T+1}$. Let $x^*_{\text{new}}$ and $\hat{x}_{\text{new}}$ denote the optimal solutions of LP$(c_{\text{new}}, A_{\text{new}},b_{\text{new}})$ and  LP$(\hat{c}_{\text{new}}, A_{\text{new}},b_{\text{new}})$,
    $$
        \mathbb{E}\left[
            \mathbb{P}(x^*_{\text{new}}=\hat{x}_{\text{new}})
        \right]
        \geq
        1- \frac{\bar{\Theta}^2+\bar{\sigma}^2\bar{\Theta}^2m^2n}{T+1},
    $$
    where the expectation is taken with respect to both the training data and the new data.
\end{proposition}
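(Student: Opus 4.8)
The plan is to recognize Algorithm \ref{alg:prcpt} as an ordinary perceptron run in a lifted matrix space under the Frobenius geometry, apply the classical two-sided potential (mistake-bound) argument, and then convert the resulting horizon-independent mistake bound into the generalization guarantee by an online-to-batch/exchangeability argument. First I would set up the lifting. For each sample $t$ and each non-basic index $i\in\mathcal{N}_t^*$, define a feature matrix $\Phi_t^{(i)}\in\mathbb{R}^{n\times d}$ whose $i$-th row equals $z_t^\top$, whose rows indexed by $\mathcal{B}_t^*$ equal $-\bigl(A_{t,\mathcal{B}_t^*}^{-1}A_{t,i}\bigr)z_t^\top$, and whose remaining rows are zero. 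A direct computation (using $\hat{c}_t=\Theta z_t$) shows that the reduced cost equals a Frobenius inner product, $r_{t,i}^{(i)}=\langle \Theta,\Phi_t^{(i)}\rangle_F$, and that the two-line parameter update in the algorithm is exactly $\Theta_{\text{tmp}}\leftarrow\Theta_{\text{tmp}}+\Phi_t^{(i)}$. Under Assumption \ref{assp:theta}, the separator $\Theta^*$ satisfies $\langle\Theta^*,\Phi_t^{(i)}\rangle_F\ge 1$ for every such $(t,i)$ (this is precisely the unit-margin optimality condition \eqref{eqn:Theta_sepa} read coordinatewise) and $\|\Theta^*\|_F\le\bar\Theta$. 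Thus the algorithm is a margin perceptron with separation margin $1$ and update threshold $1/2$.

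Next I would run the standard potential argument. Let $\Theta^{(0)}=0,\Theta^{(1)},\dots,\Theta^{(K)}$ denote the running parameter indexed by the cumulative number of coordinate updates, and let $\Phi$ denote the feature of the current update. On the one hand, each update increases $\langle\Theta^{(k)},\Theta^*\rangle_F$ by $\langle\Phi,\Theta^*\rangle_F\ge 1$, so $\langle\Theta^{(K)},\Theta^*\rangle_F\ge K$. On the other hand, an update fires only when $\langle\Theta^{(k-1)},\Phi\rangle_F\le 1/2$, so the norm grows by $2\langle\Theta^{(k-1)},\Phi\rangle_F+\|\Phi\|_F^2\le 1+R^2$, giving $\|\Theta^{(K)}\|_F^2\le K(1+R^2)$, where $R^2$ bounds the squared feature norm. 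Cauchy--Schwarz then yields $K\le\|\Theta^{(K)}\|_F\,\bar\Theta\le\sqrt{K(1+R^2)}\,\bar\Theta$, i.e.\ $K\le(1+R^2)\bar\Theta^2$, which is independent of $T$. Evaluating $R^2$ under Assumption \ref{assp:bdd} (via $\|z\|\le1$, $\sigma_{\max}(A_{\mathcal{B}^*}^{-1})\le\bar\sigma$, entries of $A$ in $[-1,1]$, and accounting for the basis block of $\Phi$ together with the up-to-$n$ non-basic coordinates processed per sample) produces the stated constant $\bar\Theta^2+\bar\sigma^2\bar\Theta^2 m^2 n$. Finally I would show each mistaken sample triggers at least one update: if no update fired while processing sample $t$, then $\Theta_{\text{tmp}}=\Theta_t$ throughout, every reduced cost exceeds $1/2>0$, and by Lemma \ref{lem:opt_con} the basis $\mathcal{B}_t^*$ is optimal for $\mathrm{LP}(\hat{c}_t,A_t,b_t)$, so $x_t=x_t^*$. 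Hence $\#\{t:x_t\ne x_t^*\}\le K$, proving the first claim.

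For the generalization bound I would use online-to-batch conversion. Append the test point as the $(T+1)$-st i.i.d.\ sample $D_{T+1}=D_{\text{new}}$; running the algorithm on $D_1,\dots,D_T$ produces exactly the iterates $\{\Theta_t\}_{t=1}^{T+1}$, and the mistake bound $B:=\bar\Theta^2+\bar\sigma^2\bar\Theta^2 m^2 n$ applies verbatim to the length-$(T+1)$ run since it is horizon-independent. Because $\Theta_\tau$ is a function of $D_1,\dots,D_{\tau-1}$ only, it is independent of any fresh sample; since $D_{T+1}$ and $D_\tau$ are i.i.d.\ and both independent of $\Theta_\tau$, the probability that $\Theta_\tau$ mispredicts the new sample equals $\E[\mathbbm{1}\{x_\tau\ne x_\tau^*\}]$. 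Averaging over the uniform choice $\hat\Theta=\Theta_\tau$, $\tau\in\{1,\dots,T+1\}$, gives $\E[\prob(x_{\text{new}}^*\ne\hat x_{\text{new}})]=\tfrac{1}{T+1}\sum_{\tau=1}^{T+1}\E[\mathbbm{1}\{x_\tau\ne x_\tau^*\}]=\tfrac{1}{T+1}\E\bigl[\sum_\tau\mathbbm{1}\{x_\tau\ne x_\tau^*\}\bigr]\le\tfrac{B}{T+1}$, which rearranges to the second claim.

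The conceptual crux is the lifting in the first step: recognizing that the reduced-cost test and the two-line update are precisely a perceptron step whose margin is inherited from \eqref{eqn:Theta_sepa}; once this is in place the potential argument is routine. Two points need care. First, the feature-norm bookkeeping must be carried through honestly to land the constant at $\bar\sigma^2\bar\Theta^2 m^2 n$ (the basis block of $\Phi_t^{(i)}$ and the several coordinate updates per sample are what generate the $m^2 n$ dependence), though any horizon-independent bound of this form suffices for the theorem. Second, the exchangeability step must be justified carefully, relying on $\Theta_\tau$ depending only on strictly earlier samples. I would also note that intra-sample cross-coordinate effects (an update at index $i$ perturbs the rows $\mathcal{B}_t^*$ and hence the reduced costs of other indices) do not disrupt the count, since the potential argument treats each coordinate update atomically and is insensitive to their ordering.
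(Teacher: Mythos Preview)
Your proposal is correct and follows essentially the same route as the paper: lift the reduced-cost/update pair to a Frobenius inner product with a feature matrix (the paper calls it $W_{t,i}$, you call it $\Phi_t^{(i)}$), run the two-sided perceptron potential argument to get a horizon-free bound on the number of coordinate updates, observe that a mistaken sample forces at least one update, and then convert to the generalization bound by the same i.i.d./exchangeability online-to-batch step used in Proposition~\ref{prop:sgd}. One minor remark: the factor $n$ in the constant does not arise from ``up to $n$ non-basic coordinates per sample'' (those are already counted individually in $K$), but rather from the paper's loose Frobenius bound $\|W_{t,i}\|_F^2\le 1+\|A_{\mathcal{B}_t^*}^{-1}\|_F^2\|A_t\|_F^2\le 1+\bar\sigma^2 m^2 n$; your per-update feature-norm bound using $A_{t,i}$ would in fact be tighter and still imply the stated inequality.
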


Proposition \ref{prop:pcpt} provides an upper bound on the number of mistakes made by Algorithm \ref{alg:prcpt}. The algorithm improves the dependency on $T$ compared with the previous bounds in Proposition \ref{prop:ofsprt} and Proposition \ref{prop:sgd}. However, we note this improvement  sacrifices the dependency on either the problem size $m$ and $n$ or the constants of $\bar{\Theta}$ and $\bar{\sigma}.$ In this light, the result is more of theoretical interest that completes our discussion. In the numerical experiments in Appendix \ref{sec_online_experi}, we observe that the performance of Algorithm \ref{alg:prcpt} is indeed worse than Algorithm \ref{alg:MOM} and Algorithm \ref{alg:sgd}.

\begin{figure*}[ht!]
\centering
\includegraphics[width=\textwidth]{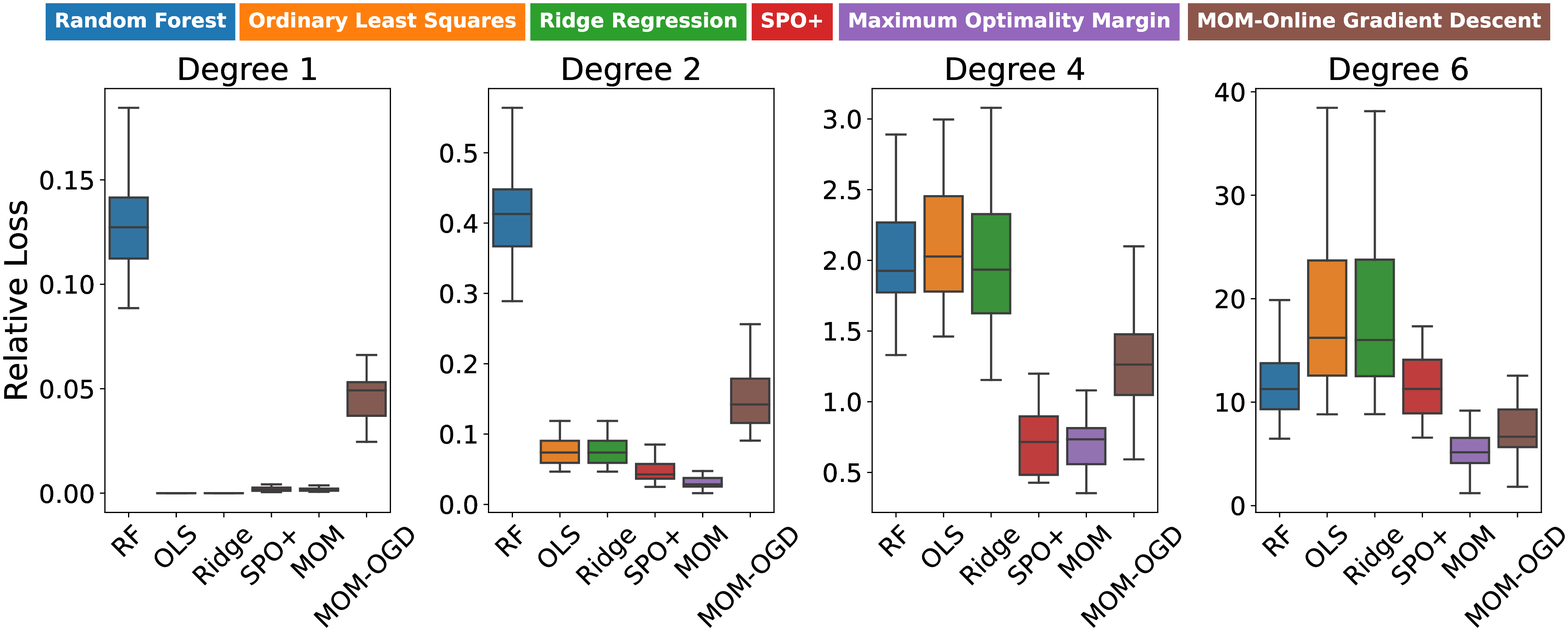}
\caption{Relative loss. The plot (means and confidence intervals) is generated based on 30 random trials each with 1000 training samples and 1000 test samples (20\% of training samples are used for tuning hyper-parameters). The degree indicates the degree of the true polynomial function that generates the objective $c$ from the covariates $z$. RF denotes random forests, OLS denotes ordinary least squares, Ridge denotes the ridge linear regression, SPO+ implements the algorithm of \cite{elmachtoub2022smart}, MOM implements Algorithm \ref{alg:MOM}, and MOM-OGD implements Algorithm \ref{alg:sgd}. For all these methods, they first predict the objective vector for a test sample, and solve an LP with the predicted objective for the optimal solution as the predicted solution for this test sample. }
\label{fig:Exp1}
\end{figure*}

\section{Numerical Experiments}
\label{sec:num_exp}

We conduct numerical experiments for two underlying LP problems -- the shortest path problem and the fractional knapsack problem. We present one experiment here and defer the remaining ones to Appendix \ref{apd:num_exp}. The experiments compare the proposed MOM algorithms against several benchmarks and illustrate different aspects, such as loss performance, computational time, scale consistency/invariance, sample complexity, kernelized version of MOM, and online setting. 

Specifically, we consider a shortest path (SP) problem here following the setup of \cite{elmachtoub2022smart}. The SP problem is defined on a $5\times 5$ grid network with $n=40$ directed edges associated with a cost vector $c\in \mathbb{R}^n$, and the covariates $z\in\mathbb{R}^d$ with $d=6$. For the training data, the covariates are generated from the Gaussian distribution, and the objective vector is generated from a polynomial mapping from the covariates with additional white noise. For this numerical experiment, we consider the performance measure of relative loss defined by
$$ \text{Relative Loss} \coloneqq \frac{c_{\text{new}}^\top(x_{\text{new}} - x^*_{\text{new}})}{c^\top x^*_{\text{new}}}$$
where $c_{\text{new}}$ is the objective vector of a new test sample, $x_{\text{new}}$ is the predicted optimal solution, and $x_{\text{new}}^*$ is the true optimal solution. Indeed, this relative loss normalizes the estimate loss $l_{\text{est}}$ with the optimal objective value.
We defer more details on the experiment setup to Appendix \ref{apd:num_exp}.

Figure \ref{fig:Exp1} presents the experiment results for two MOM algorithms and a few benchmarks, and each panel represents a different degree of the polynomial that governs the true mapping from the covariates to the objective vector; a higher degree indicates a stronger non-linearity between the covariates and the objective. We make the following observations:

First, from an information viewpoint, all four benchmark algorithms utilize the observations of $c_t$'s on the training data, while the two MOM algorithms only observe $x_t^*$'s on the training data. In this sense, our algorithms give a better predictive performance with less amount of information, and therefore our algorithms are the only ones applicable to the inverse LP problem.

Second, other than the two MOM algorithms, the SPO+ performs the best among the four benchmark algorithms. However, the SPO+ takes significantly longer training time than all the other algorithms. Although SPO+ is a stochastic gradient descent-based algorithm, the calculation of the stochastic gradient at each step requires solving $k$ LPs with $k$ being the number of training samples in the mini-batch. Comparatively, our MOM Algorithm \ref{alg:MOM} only solves a quadratic program for once, and its online version of Algorithm \ref{alg:sgd} allows a direct calculation of the online sub-gradient which makes it even faster than Algorithm \ref{alg:MOM}.

Third, the three ML-based methods (RF, OLS, Ridge) perform generally worse than the other three methods (SPO+, MOM, MOM-OGD) because they do not take into account the optimization structure. To see this, for the shortest path problem, there may be some large entries for the cost vector $c$. The ML models treat all the dimensions of $c$ equally and spend too much effort in fitting those large entries (as they cause large L$_2$ errors). However, from the optimization perspective, an accurate estimation of those large entries is not useful in that the optimal path will avoid those edges. That highlights the intuition behind the SPO+ and our MOM algorithms -- one needs to incorporate the optimization structure into the learning model.

%\emph{Analysis}. We can see that for small degrees $\text{deg} \in \{1,2\}$ where the linear model estimation works fine, MOM, SPO+, and linear methods achieve similar good performances. Random Forest algorithm performs not so well. For high degrees $\text{deg} \in \{4,6\}$, linear model's misspecification becomes severe. An interesting fact is that the performance of our MOM Algorithm \ref{alg:MOM} dominates other algorithms also in high degree settings. We give a possible explanation here: high degree polynomials magnify the components of cost vectors that are larger than 1. Directly predicting algorithms place equal weights on all components of cost vectors, and their focus are dragged to those unusually large components. But for Shortest Path problem, improving prediction accuracy on large components hardly benefits the performance of the algorithm, since the optimal solution always excludes those edges with huge costs. Instead, due to the worsened accuracy on those small components, the performance of RF, OLS, and Ridge are held back. Our MOM approach behaves alternatively: for those large components of cost vectors, the estimator does not place any further concern if their corresponding reduced costs are large than 1. Note that our online version of MOM can also be viewed as a stochastic gradient descent with batch size 1 and 1000 time steps. The performance gap between the offline MOM Algorithm \ref{alg:MOM} and MOM-OGD Algorithm \ref{alg:sgd} can be partially explained by the small batch size and lack of iterations.

\section{Discussions}
\label{sec:discussions}
Many existing inverse LP algorithms \citep{barmann2018online, chen2020online} implement algorithms that directly minimize the suboptimality loss $l_{\text{sub}} = \hat{c}^\top x^* - \hat{c}^\top \hat{x}$ via an online gradient descent (OGD) algorithm. We briefly showcase a simple example in the linear case $\hat{c} = \Theta z$ with $l_{\text{sub}, t}(\Theta) = \langle \Theta, (x_t^* -\hat{x}_t) z_t^\top \rangle$ and regarding the gradient as $(x^* -\hat{x}) z^\top$. One may wonder if such an OGD achieves a performance guarantee of $O(1/\sqrt{T})$ (as our result in the previous sections) following the analysis of OGD of the convex functions. But we will show its impossibility.

Why not OGD? The reason is that the minimizer of $l_{\text{sub}}(\Theta)$ is $l_{\text{sub}}(0) \equiv 0$. The dilemma of OGD starts: if the feasible region of the parameter $\mathcal{K}$ contains the original point 0, $\Theta_t$ under OGD will be gradually drawn to 0. But this means that the algorithm does not learn anything meaningful. This is a common issue for many inverse LP papers \citep{bertsimas2015data, mohajerin2018data, barmann2018online, chen2020online}, regardless of offline and online. In fact, this is what we refer to as scale inconsistency.

Alternatively, if we constrain the feasible region $\mathcal{K}$ to only contain the unit-norm $\Theta$'s, then the loss is not geodesically convex on the manifold of the unit sphere. To see this, if we impose a unit sphere constraint, then the OGD algorithm \citep{zinkevich2003online} will fail, because a basic requirement for OGD is the underlying problem's convexity. The OGD will require additional structures such as convexity over the manifold of the unit sphere, which does not hold even for linear functions \citep{jain2017non}.

Following the above arguments, one may find a seeming contradiction: the suboptimality loss $l_{\text{sub}, t}(\Theta) = \langle \Theta, (x_t^* -\hat{x}_t) z_t^\top \rangle$ seems to be a linear function of $\Theta$ at first glance, but why would the optimal $\Theta$ to be at an interior point $0 \in \mathcal{K}$ while the theory of linear programs tells us that the optimal solution of an LP always lies at the boundary? We need to notice the dependence of $\hat{x}_t$ on $\Theta$ since $x_t$ is the optimal solution induced by $\Theta z_t$. Such a dependence breaks the linearity, which means that $(x_t^* -\hat{x}_t) z_t^\top$ is even not the gradient. Should one get the gradient right, they cannot avoid being trapped by the naive prediction $\Theta = 0$ as mentioned before.

Finally, we explain the differences between the $\Theta_t \rightarrow 0$ phenomena of the above OGD and our MOM approach but under the constraints \eqref{alter2}. The cumulative suboptimality loss $\sum_{t=1}^T l_{\text{sub}, t}(\Theta)$ is actually bounded by $\sum_{t=1}^T \|s_t\|_1$ (proved in Lemma \ref{lem:optc}). The upper bound relation holds when $s_t$ satisfies either \eqref{alter2} or the original MOM constraints \eqref{prob:svm}. Such a relation has two implications: (i) the margin mechanism avoids MOM algorithms converging to zero; (ii) the performance guarantee for the MOM objective directly transfers to that of $l_{\text{sub}, t}(\Theta)$.

\textbf{Concluding remarks.} In this paper, we develop a new approach named Maximum Optimality Margin for the problems of predict-then-optimize and inverse LP. The MOM framework leads to new characterizations of the difficulty of the problem through the separability condition of Assumption \ref{assp:theta} and the violation loss $l(D_{\text{new}};\Theta)$ which appears in Propositions \ref{prop:ofgen}, \ref{prop:insep}, and \ref{prop:sgd}. With the MOM perspective, we derive bounds on correctly predicting the optimal solution,
$ \mathbb{P}(x^*_{\text{new}}=\hat{x}_{\text{new}})$
in Propositions \ref{prop:ofsprt}, \ref{prop:sgd}, and \ref{prop:pcpt}, which is rarely seen in the existing literature even under strong conditions. Without the separability condition of Assumption \ref{assp:theta}, we draw a connection of our MOM objective value with the suboptimality loss $l_{\text{sub}}$ in the inverse LP literature and derive performance bounds in terms of $l_{\text{sub}}$. Numerically, we observe the MOM algorithms also perform well under the estimate loss $l_{\text{est}}$. To theoretically quantify the performance of MOM under $l_{\text{est}}$, we conjecture that one needs to impose stronger structures on the dual LPs. Besides, another interesting future direction to pursue is to generalize the algorithms and analyses of MOM to non-linear optimization problems.

\bibliographystyle{informs2014}
\bibliography{main}

%%%%%%%%%%%%%%%%%%%%%%%%%%%%%%%%%%%%%%%%%%%%%%%%%%%%%%%%%%%%%%%%%%%%%%%%%%%%%%%
%%%%%%%%%%%%%%%%%%%%%%%%%%%%%%%%%%%%%%%%%%%%%%%%%%%%%%%%%%%%%%%%%%%%%%%%%%%%%%%
% APPENDIX
%%%%%%%%%%%%%%%%%%%%%%%%%%%%%%%%%%%%%%%%%%%%%%%%%%%%%%%%%%%%%%%%%%%%%%%%%%%%%%%
%%%%%%%%%%%%%%%%%%%%%%%%%%%%%%%%%%%%%%%%%%%%%%%%%%%%%%%%%%%%%%%%%%%%%%%%%%%%%%%
\newpage
\appendix

\section{Numerical Experiments}
\label{apd:num_exp}

In this section, we provide the details of the experiment setup and more numerical results. The codes and data can be found on \url{https://github.com/liushangnoname/Maximum-Optimality-Margin}.

\subsection{Basic setup}

\label{sec:num}

For the numerical experiments, we compare the performance of our proposed algorithms against several benchmark methods, and we test the performance under both online and offline settings. Specifically, we consider the following benchmark methods where all of them can only be used to solve the predict-then-optimize problem (but not the contextual linear programming) as they all require the observations of $c_t$ on the training data.
\begin{enumerate}
\item Linear regression models where we consider ordinary least squares (OLS) and ridge regression (RR). Accordingly, the parameter estimation follows the following loss functions:
    $$l_{\text{OLS}} = \frac{1}{2}\|\Theta z -c\|_2^2,\quad l_{\text{Ridge}} = \frac{1}{2}\|\Theta z - c\|_2^2 + \frac{\lambda}{2}\|\Theta\|_F^2.$$
\item Random Forest (RF). We apply RF algorithm with squared error criterion $$l_{\text{RF}}=\|\hat{c}-c\|_2^2.$$
\item Predict-then-optimize method. We take SPO+ algorithm \citep{elmachtoub2022smart} as an example, where the loss function is as follows
    $$ l_{\text{SPO+}} = (2\Theta z - c)^\top x^*(c) - \min_{Ax=b, x\geq0}\{(2\Theta z -c)^\top x\}. $$
Following the previous implementations, we optimize the SPO+ loss under a stochastic gradient descent (SGD) approach and Frobenius regularization.
\end{enumerate}

\textbf{Underlying LP problems.} \

We study two canonical LP problems for the numerical experiments, namely the shortest path (SP) problem and the fractional knapsack (FK) problem, and we present their results in the following two sections \ref{sec:Num-SP} and \ref{sec:Num-FK}, respectively. The experiment setup follows that of \cite{elmachtoub2022smart,ho2022risk, besbes2021contextual}.

\textbf{Performance Measurements.}

We compare different algorithms by the performance measure of \emph{relative loss} for the offline setting and \emph{cumulative Regret} in the online setting. 

Under the offline setting, the relative loss normalizes the estimate loss $l_{\text{est}}$ by the optimal value,
$$ \text{Rel-Loss}_{\text{SP}} \coloneqq \frac{c^\top(x - x^*)}{c^\top x^*}. $$
Specifically, for the shortest path (SP) problem, the optimal value is always non-zero (unless for trivial $c$) so the loss is well-defined.

For the fractional knapsack (FK) problem, due to the random noise, the optimal value may be zero with a positive probability. So we consider another normalization that leads to 
$$ \text{Rel-Loss}_{\text{FK}} \coloneqq \frac{c^\top(x^* - x)}{\|c\|_2}. $$

Under the online setting, we define cumulative regret as the cumulative sum of the relative loss of each time step.

\textbf{Overview of the results.}

In Appendix \ref{sec:Num-SP}, we demonstrate the algorithm performance under the effect of degree (see Figure \ref{fig:Exp1}) and also investigate numerically the issue of scale inconsistency/invariance (see Figure \ref{fig:Exp2}). In Appendix \ref{sec:Num-FK}, we examine the sample complexity under the offline setting (see Figure \ref{fig:Exp3}) and compare the performance of several online algorithms (see Figure \ref{fig:Exp5}). In addition, we consider the kernelized version of the MOM formulation and present its performance in Figure \ref{fig:Exp4}.

\subsection{Shortest path problem}
\label{sec:Num-SP}
For the shortest path problem, we follow the experiment setup of \citep{elmachtoub2022smart}. Consider a $5\times 5$ grid network with $n=40$ directed edges associated with a cost vector $c\in \mathbb{R}^n$. Each edge is either from south to north or west to east, where the goal is to minimize the cost to traverse from the southwest corner to the northeast corner. For each node of the network, there is a flow balance constraint, encoded by $Ax=b$. The problem can then be formulated as the standard-form LP \eqref{lp:std}.

For the methods of ordinary least square, ridge regression, and our maximum optimality margin, we solve the corresponding optimization problems using \texttt{cvxpy} package with \texttt{GUROBI} solver. 
For the random forest algorithm, we implement the \texttt{sklearn.ensemble.RandomForestRegressor} package. For the SPO+ algorithm and the online version of our MOM approach, we implement a gradient descent approach. For each simulation trial, the training sets and test sets both consist of $N=1000$ sample sizes. All regularization parameters are chosen from $10^{-6}$ to $10^2$, and all step sizes are chosen from $10^{-3}$ to $10^{1}$. All projection radius (the diameter of $\mathcal{K}$) are chosen from $0.8\|B\|_F$ to $2.0\|B\|_F$. The hyper-parameters are chosen via a validation subset of $N/4$ samples from the training data.

\emph{Synthetic Data Generation}. For each trial of our experiments, we generate a gound-truth coefficient matrix $V \in \mathbb{R}^{n\times d}$ independently, where each component of $V$ is drawn from $\text{Bernoulli}(0.5)$. Here $n=40$ and $d=6$.
\begin{enumerate}
    \item We generate the feature vectors $\{z_t\in \mathbb{R}^d\} $. The first $d-1$ components are taken independently from a standard Gaussian distribution, and the last component of each $z_t$ is set to be 1 (to model the constant in predictors).
    \item Then the true cost vectors are generated as follows:
    $$ c_{t,j} = \left[(\frac{1}{\sqrt{d}}(V z_t)_j + 3)^{\text{deg}} + 1\right]\cdot \epsilon_{t,j}\cdot \alpha_t + \bar{\eta} \eta_{t,j}, $$
    where deg is a fixed integer to be chosen to represent the nonlinearity of the problem, $\epsilon_{t,j}$ represents the intrinsic diverse noise, $\alpha_t \in \{1,1+\bar{\alpha}\}$ stands for a scale noise which depends on $z_t$, and $\eta_{t,j}$ is the additive noise. $\bar{\epsilon}$, $\bar{\alpha}$, and $\bar{\eta}$ are non-negative parameters chosen to control the power of each noise. Since the distribution of $\alpha_t$ depends on feature $z_t$, it henceforth can be considered as a type of attack that changes the scale of $c_t$. More specifically,
    $$\epsilon_{t,j} \sim \text{Unif}[1-\bar{\epsilon}, 1+\bar{\epsilon}],$$
    $$ \alpha_t = \begin{cases} 1+\bar{\alpha}, & \text{if }z_{t,1} > 0.5\\
    1, & \text{other cases},
    \end{cases}$$
    $$ 2\eta_{t,j} + 1 \sim \text{Exponential}(1). $$
\end{enumerate}

\emph{Results}. Using the above data generation process, we test the algorithms Random Forest (RF), Ordinary Least Squares (OLS), Ridge Regression (Ridge), SPO+, Maximum Optimality Margin (MOM), and MOM-OGD over 30 different trials. We test the effect of deg (see Figure \ref{fig:Exp1}) and $\bar{\alpha}$ (see Figure \ref{fig:Exp2}) in these problems, where other parameters are tested later in Fractional Knapsack problem.

The first experiment is to test the model misspecification error of different algorithms, where the deg parameter varies among $\{1,2,4,6\}$. There is no noise at all (i.e. $\bar{\epsilon} = \bar{\alpha} = \bar{\eta} = 0$) so as to emphasize the misspecification effect. The results can be found in Figure \ref{fig:Exp1}.

Another experiment is to study the effect of scale noise. As is shown in Appendix \ref{sec:consist_of_lr}, if the scale noise is independent of the feature vector, the consistency of the linear regressor still holds. So we design a scale noise that is correlated with the feature vector, which is the $\alpha_t$ in our setting. To underline the effects of scale noise, we set $\text{deg}=1$ and $\bar{\epsilon} = \bar{\eta} = 0$ in our experiments. The result is shown in Figure \ref{fig:Exp3}.

\begin{figure}[tbp]
    \centering
    \includegraphics[scale=0.35]{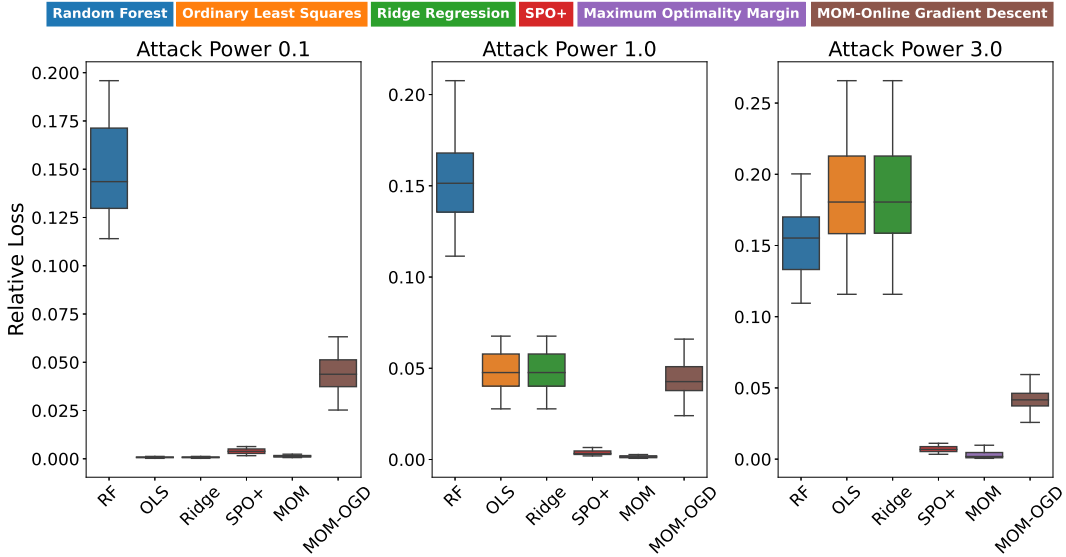}
    \caption{Experiment 2: $\text{Rel-Loss}_{\text{SP}}$ under different $\bar{\alpha}$'s, 30 trials. Here $\bar{\alpha}$ represents the power of the scale noise, denoted by \emph{Attack Power}.}
    \label{fig:Exp2}
\end{figure}

\emph{Analysis}. From the result of the first experiment (see Figure \ref{fig:Exp1}), we can see that for small degrees $\text{deg} \in \{1,2\}$ where the linear model estimation is not a heavy mistake, our algorithm MOM, SPO+ of \cite{elmachtoub2022smart}, and linear regression type methods behave similarly, where linear methods perform the best if the model is exactly linear while MOM slightly dominates other methods in the $\text{deg}=2$ case. The Random Forest algorithm performs not so well in those cases probably due to the fact that it is a non-parametric method. For high degrees $\text{deg} \in \{4,6\}$, the model misspecification error becomes rather severe. The performance of linear methods falls drastically, while the performance of MOM dominates other algorithms. The good performance of the MOM approach could possibly be explained in the following way: polynomial function with a rather high degree magnifies the components of cost vectors that are larger than 1, where those algorithms which directly predict cost vectors place equal weights on all components and their focus are dragged to those unusually large components. But for the Shortest Path problem, improving prediction accuracy on those large components hardly benefits the performance of the algorithm, since the optimal solution will avoid those edges with huge costs. Instead, the prediction accuracy on those moderate and small components is worsened by focusing on the large components, which holds back the performance of RF, OLS, and Ridge. The MOM approach behaves alternatively: for those large components of cost vectors, the estimator does not place any further concern if their corresponding reduced costs are large than 1, so the MOM estimator outperforms other methods.

Note that the SPO+ method we apply here is a stochastic gradient descent version, which empirically converges with batch size 5 in 2000 steps. Our online gradient descent version of our MOM approach can also be viewed as a stochastic gradient descent implementation of our MOM approach with batch size 1 and 1000 time steps. The performance gap between the offline MOM algorithm and the MOM-OGD algorithm can be partly explained by the small batch size and the lack of iterations.

The second experiment indicates the effects of feature-dependent scale noise, which can be found in Figure \ref{fig:Exp2}. Note that the scale noise is only added to data when the first component of the feature is larger than 0.5. As a typical ensemble method, RF performs quite steadily due to the nature of tree estimators. That is not the case for linear regressors, since their estimation of true coefficients of the first feature component will be heavily disturbed. For the other three methods MOM, SPO+, and MOM-OGD, their performances are quite stable, thanks to the fact that they are only concerned with the optimal solutions of linear programs that remain unchanged under scale noises.

\subsection{Fractional knapsack problem}
\label{sec:Num-FK}

For this experiment, we follow the problem setup 
of \citep{ho2022risk}. Specifically, each decision maker is presented with a bunch of items, and their aim is to maximize the utility (or equivalently, to minimize the cost) under a knapsack constraint. Different from the discrete case, the fractional knapsack problem allows the decision maker to select a fraction of some certain item, making it a linear program. Mathematically, the decision-maker solves the following LP
$$ \min_{x,s_1,s_2}\  c^\top x, \quad \text{s.t. } p^\top x + s_1 = B,\  x+s_2=1,\  x,s_1,s_2 \geq 0. $$
Here $p \in \mathbb{R}^n$ can be interpreted as the price vector and $B \in \mathbb{R}$ is the budget of the decision maker. The corresponding utility vector should be interpreted as the opposite of the objective vector $c$ since the LP considers a minimization problem. The slack variables $s_1$ and $s_2$ are introduced simply to convert the problem into a standard form.

As for the offline learning problems, the algorithms we implement are exactly the same as we do in \ref{sec:Num-SP}. We also apply the online setting here with three algorithms: Perceptron (as is illustrated in Algorithm \ref{alg:prcpt}), Maximum Optimality Margin with Follow the Regularized Leader (as is illustrated in Algorithm \ref{alg:ftl}), and Maximum Optimality Margin with Online Gradient Descent (as is illustrated in Algorithm \ref{alg:sgd}).

\emph{Synthetic Data Generation}. At the beginning of every trial, we generate a ground-truth coefficient matrix $V \in \mathbb{R}^{n \times d}$, where each component is a Bernoulli random variable with an expectation of 0.5. Here $n=10$ and $d=5$. We then generate the price vector, where each component is selected to be a random integer between 1 and 1000 in an offline setting. To ease the pain of large constants, we re-scale the price vector in an online setting, where each component is uniformly distributed between 0 and 1. We generate two auxiliary variables \emph{low} and \emph{high}, where $\text{low} = \max_{j} p_j$ and $\text{high} = \bm{1}^\top p - u \cdot \text{low}$, and $u \sim \text{Unif}[0,1]$. Then the budget is chosen as $\text{Unif}[\text{low}, \text{high}]$.
\begin{enumerate}
    \item We generate the feature vectors $\{z_t \in \mathbb{R}^d\}$. The first $d-1$ components are taken independently from a $\text{Unif}[0,1]$ distribution and the last component of each $z_t$ is set to be 1 (to model the constant in predictors).
    \item Then the true cost vectors are generated as follows:
    $$ c_{t,j} = (V z_t)_j^{\text{deg}}\cdot \epsilon_{t,j}\cdot \alpha_t + \bar{\eta} \eta_{t,j}, $$
    where deg is a fixed integer to be chosen to represent the nonlinearity of the problem, $\epsilon_{t,j}$ represents the intrinsic diverse noise, $\alpha_t \in \{1,1+\bar{\alpha}\}$ stands for a scale noise which depends on $z_t$, and $\eta_{t,j}$ is the additive noise. $\bar{\epsilon}$, $\bar{\alpha}$, and $\bar{\eta}$ are non-negative parameters chosen to control the power of each noise. Since the distribution of $\alpha_t$ depends on feature $z_t$, it henceforth can be considered as a type of attack that changes the scale of $c_t$. More specifically,
    $$\epsilon_{t,j} \sim \text{Unif}[1-\bar{\epsilon}, 1+\bar{\epsilon}],$$
    $$ \alpha_t = \begin{cases} 1+\bar{\alpha}, & \text{if }z_{t,1} > 0.5\\
    1, & \text{other cases},
    \end{cases}$$
    $$ 2\eta_{t,j} + 1 \sim \text{Exponential}(1). $$
\end{enumerate}

% We also note that to test the Perceptron algorithm under strictly separable conditions, we only accept those data where the reduced costs of non-basic variables are at least $2\gamma = 0.1$.

\emph{Results}. Equipped with such a data generation process, we apply several numerical experiments in both offline and online settings. We first test the sample complexity under offline setting, with $\text{deg} = 1$, $\bar{\epsilon} = 0.1$, $\bar{\eta} = 1.0$, and $\bar{\alpha} = 0.0$. We apply 30 trials. For training sets, we generate different $T=100,200,500,1000,5000$, and test the performance against 1000 test samples. The algorithms we choose contain Random Forest (RF), Ordinary Least Squares (OLS), Ridge Regression (Ridge), SPO+ of \cite{elmachtoub2022smart}, our Maximum Optimality Margin Algorithm \ref{alg:MOM} (MOM), and our MOM-OGD Algorithm \ref{alg:sgd}. All regularizing constants are chosen from $10^{-6}$ to $10^2$. All step sizes are chosen from $10^{-3}$ to $10^{1}$. All projection radii are chosen from $0.8\|B\|_F$ to $2.0\|B\|_F$. The hyper-parameters are decided via a validation set of $T/4$ samples and the criterion of averaged Relative Loss. The results can be found in Figure \ref{fig:Exp3}.

\paragraph{Sample complexity.}\

\begin{figure}[ht!]
    \centering
    \includegraphics[scale=0.5]{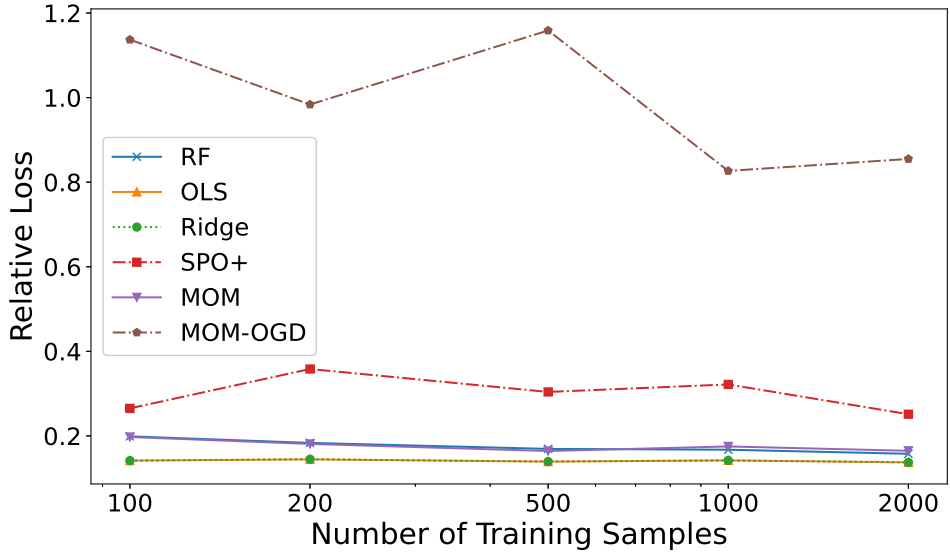}
    \caption{Sample complexity of different algorithms.}
    \label{fig:Exp3}
\end{figure}

\paragraph{Kernelization and non-linear MOM.}\

We then develop some kernelized versions of our MOM approach, resulting in three different SVM-based algorithms: Linear MOM (Linear), Polynomial Kernelized MOM (PolyKer), Radial Basis Function Kernelized MOM (RbfKer). For those kernelized methods, the original feature $z_{t_0}$ is replaced by an $T$-dimensional feature $(\kappa(z_{t_0}, z_t))_{t=1}^T$, where $\{z_t\}_{t=1}^T$ is the training set of features and $\kappa$ is some pre-chosen kernel function. For the polynomial kernel, we utilize $\kappa_{\gamma,d_0}(z_i, z_j) = (z_i^\top z_j \gamma^{-1} + 1)^{d_0}$. For the radial basis function kernel, we define $\kappa_{\gamma}(z_i, z_j) = \exp(-\frac{\|z_i-z_j\|^2}{\gamma})$. The degree of the polynomial kernelized MOM is chosen from $\{1,2,3,4\}$ and the scale parameters $\gamma$'s of both kernelized MOM methods are chosen from $\{0.1, 0.5, 1, 2, 3, 4, 5\}$. All regularizing constants are chosen from $10^{-6}$ to $10^2$. The training data size is now reduced to $N=500$. The hyper-parameters are decided via a validation set of $T/4$ samples and the criterion of averaged Relative Loss. We paint the boxplots of 30 independent trials. The results can be found in Figure \ref{fig:Exp4}.

\begin{figure}[ht!]
    \centering
    \includegraphics[scale=0.4]{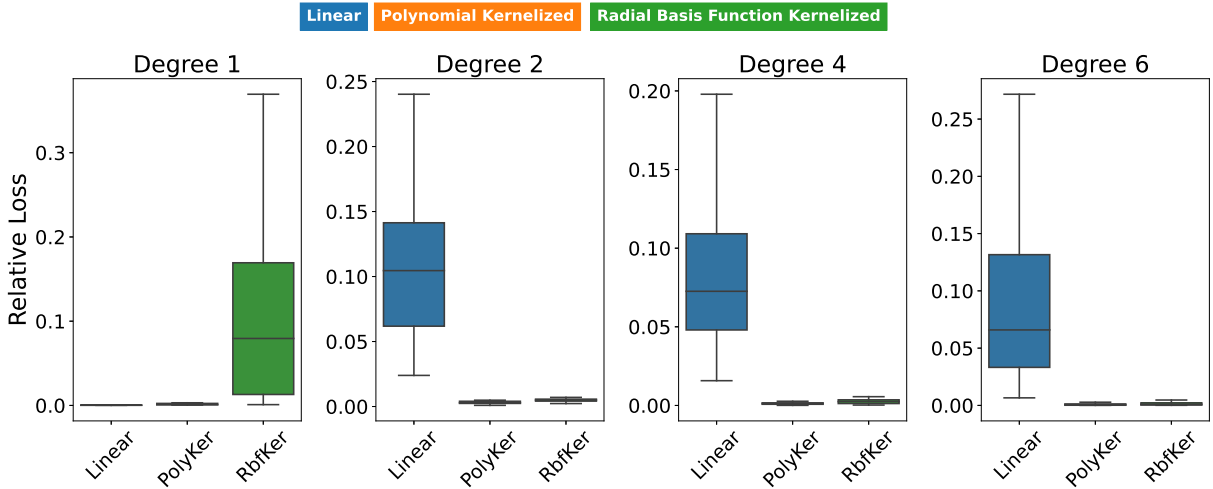}
    \caption{Experiment 4: $\text{Rel-Loss}_{\text{FK}}$ under different data generation degrees, 30 trials.}
    \label{fig:Exp4}
\end{figure}

The fourth experiment is about the kernelized versions of our MOM approach, where three MOM algorithms named Linear, Polynomial Kernelized, and Radial Basis Function Kernelized are implemented (see Figure \ref{fig:Exp4}). The performance of linear MOM is worsened as the degree of the model grows since the model misspecification becomes more severe. But kernelized MOM methods remains good performance under high degrees. Note that the performance of RBF Kernelized MOM's behavior is unsatisfying when the model is perfectly linear. Those kernelized versions of our algorithms reveal the potential of generalizing our MOM principle to other cases apart from just the linear model.

\emph{Analysis}. The third experiment we adopt is examining the sample complexity of the aforementioned algorithms under a noisy environment (see Figure \ref{fig:Exp3}). The performance of linear regressors exceeds other algorithms since there is no model misspecification at all. The performance of the Random Forest algorithm is similar to that of the offline Maximum Optimality Margin algorithm, which is slightly worse than that of linear regressors. As for the SPO+ method, their averaged performance is undermined by the noise to a fairly large extent, compared with its performance under a noiseless environment in Experiment 1 and Experiment 2 (see Figure \ref{fig:Exp1} and Figure \ref{fig:Exp2}). The difference in the performance between SPO+ and MOM can be partially explained by the different ways of dealing with noisy data. SPO+ directly put the optimal solution of the noisy data to the gradient, while the optimal solution could probably vary exaggeratedly due to some noise to the cost vector. MOM acts more steadily: for those noisy cost vectors, their optimal solutions could be far from the noiseless cost vectors, but the change with respect to optimal basis will usually be only a few components which only affects a few corresponding lines in estimated $\Theta$. Finally, we note that the small batch size and the lack of iteration could be blamed for the poor performance of MOM-OGD in noisy data since the noisy data exacerbates the variance.

To complete the argument, we provide some extra numerical experiments to compare our kernelized methods with other kernelized methods such as kernelized ridge regression. The first extra experiment adopts the same setting as that in Figure \ref{fig:Exp4}. We keep the same setup (except for only $10$ independent trials instead of $30$ trials to save the time cost) again for 6 benchmark algorithms: Random Forest (RF), Ordinary Least Squares (OLS), Ridge Regression (Ridge), Polynomial Kernelized Ridge Regression (PolyRidge), Rbf Kernelized Ridge Regression (RbfRidge), and SPO+, and 3 of our MOM algorithms: Maximum Optimality Margin (MOM), Polynomial Kernelized MOM (PolyMOM), and Rbf Kernelized MOM (RbfMOM). The results can be found in Figure \ref{fig:Exp6}.

\begin{figure}[ht!]
    \centering
    \includegraphics[scale=0.45]{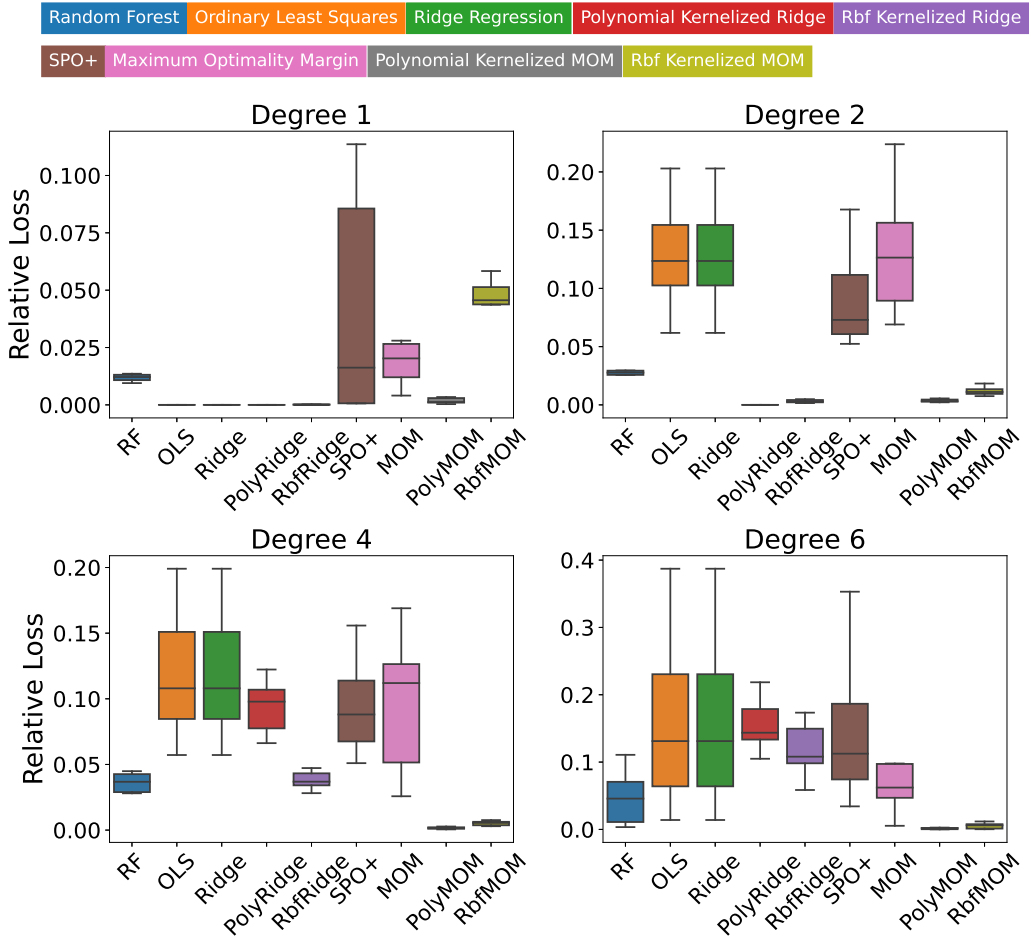}
    \caption{An extra comparison of the relative loss for kernelized MOMs and benchmark algorithms for the Fractional Knapsack problem.}
    \label{fig:Exp6}
\end{figure}

We also provide another result under a similar setting for the Shortest Path problem. The training sample size is now reduced to $200$, and the number of independent trials is also reduced to $10$ to ease the computational price. To emphasize the scale consistency of our MOM approach, we apply the same noise setup as the last sub-figure of Figure \ref{fig:Exp2}. The results are summarized in Figure \ref{fig:Exp7}.

\begin{figure}[ht!]
    \centering
    \includegraphics[scale=0.45]{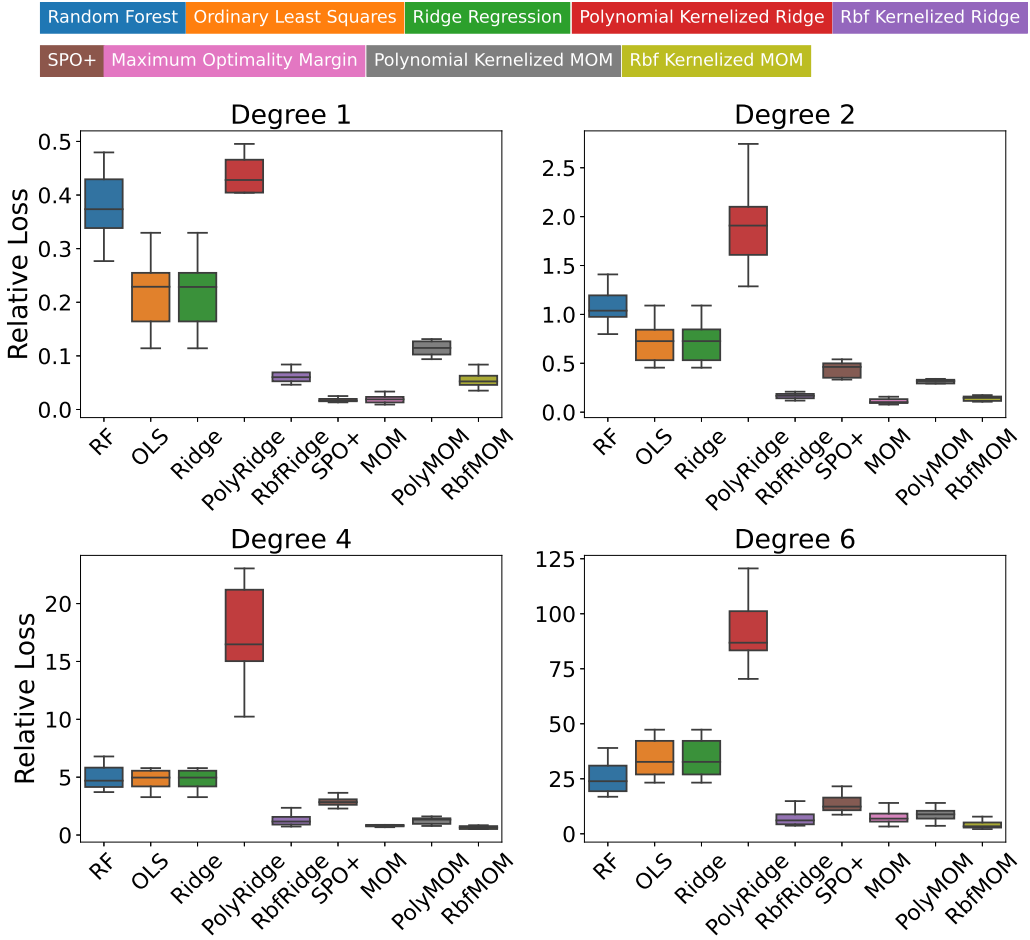}
    \caption{Relative loss for kernelized MOMs and benchmark algorithms for the Shortest Path problem.}
    \label{fig:Exp7}
\end{figure}

\subsection{Online setting}

\label{sec_online_experi}

Next we analyze the online setting for the MOM algorithms. We compare the performance of the OGD version of MOM (Algorithm \ref{alg:sgd}), the perceptron version of MOM (Algorithm \ref{alg:prcpt}), and also a follow-the-regularized-leader (FTRL) version of the MOM Algorithm \ref{alg:MOM} as follows. Basically, Algorithm \ref{alg:ftl} solves the MOM optimization formulation \ref{prob:svm} at each time $t$ and use the estimated parameter to predict the optimal solution of the next time step.

\begin{algorithm}[ht!]
    \caption{Follow-the-Regularized-Leader MOM}
    \label{alg:ftl}
    \begin{algorithmic}[1] 
    \Require Dataset $\mathcal{D}(T)=\{( {x}_t^*, {A}_t, {b}_t,z_t,\mathcal{B}_t^*,\mathcal{N}_t^*)\}_{t=1}^{T}$, the set $\mathcal{K}$
    \State Initialize ${ {\Theta}}_1 =  {0}\in\mathbb{R}^{n\times d}$
    \For {$t=1,...,T$}
\State  Predict the objective by $\hat{{c}}_t= {\Theta}_{t} {z}_t$
            \State  Solve the following LP and denote its optimal solution by $x_t$ \begin{align*}
 \min \ &  \hat{c}_{t}^\top x,\\
    \text{s.t.\ } &  A_{t}x=b_{t}, \ x\ge 0. \nonumber
\end{align*}
        \State Solve the optimization problem \eqref{prob:svm} with $\{( {x}_s^*, {A}_s, {b}_s,z_s,\mathcal{B}_s^*,\mathcal{N}_s^*)\}_{s=1}^{t}$, $\lambda=1/\sqrt{t}$ and $\mathcal{K}$
        \State Let $\Theta_{t+1}$ be the optimal solution 
    \EndFor
    \Ensure $\{{x}_t\}_{t=1}^{T}$ 
    \end{algorithmic}
\end{algorithm}

Algorithm \ref{alg:ftl} gives another interpretation of the MOM formulation. Under an online setting, if we solve the MOM optimization problem \eqref{prob:svm} at each time, this is equivalent to a follow-the-regularized online algorithm to solve the problem. The theoretical analysis of Algorithm \ref{alg:ftl} can be extended from Proposition \ref{prop:ofsprt} and Proposition \ref{prop:ofgen}.

\begin{figure}[ht!]
    \centering
    \includegraphics[scale=0.4]{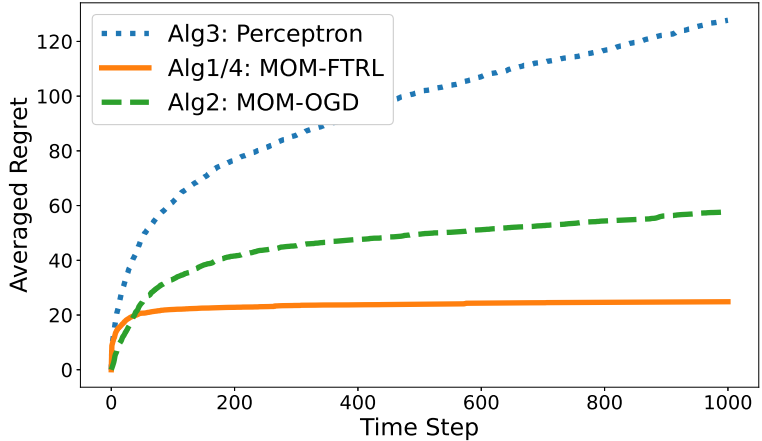}
    \caption{Cumulative regret curve (averaged over 30 trials) for online algorithms.}
    \label{fig:Exp5}
\end{figure}

Figure \ref{fig:Exp5} presents the cumulative regret of several algorithms for a fractional knapsack problem. In this numerical experiment, we let the objective vector $c=\Theta^* z$ almost surely for some fixed $\Theta^*$ so as to achieve the separability condition. We emphasize that it only ensures the existence of a parameter $\Theta^*$ to meet the optimality condition \eqref{lp_opt_con} but not with a margin of 1 as Assumption \ref{assp:theta}. From the plot, we observe that Algorithm \ref{alg:prcpt} has the worst performance though it features for the best theoretical dependency on $T$. We remark that this regret curve does not contradict the bounded number of mistakes in Proposition \ref{prop:pcpt}; we find that when we increase the horizon to $T=100,000$, the regret curve of Algorithm \ref{alg:prcpt} becomes flattened. For Algorithm \ref{alg:sgd} and Algorithm \ref{alg:ftl}, both achieve significantly better performance. Comparatively, \ref{alg:ftl} performs better than Algorithm \ref{alg:sgd}, with the price of more computation cost (to solve a quadratic program at each time period). 

Importantly, we also implement the online algorithm of \citep{barmann2018online, chen2020online} under the same setting, and it incurs a regret linearly increasing with $T$; so we do not include its regret curve as it will clap all the regret curves of Figure \ref{fig:Exp5} to x-axis. A closer investigation shows that the online algorithm of \citep{barmann2018online, chen2020online} will render the estimate $\Theta_t\rightarrow 0$, as noted by the scale inconsistency issue in earlier sections. In contrast, our margin-based formulation plays an important rule to ensure that a small optimality condition violation is caused by discovering the correct mapping from $z$ to $c$ but not by the scale of the predicted $\hat{c}.$

\section{Proofs}

\subsection{Proof of Lemma \ref{lem:opt_con}}

Here we show a stronger version of Lemma \ref{lem:opt_con} as follows.

\begin{lemma}
\label{lem:optc}
    Consider an LP of the standard form \eqref{lp:std}. For any feasible basis $\mathcal{B}\subset [n]$ satisfying $|\mathcal{B}|=m$ and its complement $\mathcal{N}=[n]\backslash \mathcal{B}$, denote $ {x}=(x_1,...,x_n)^{\top}$ and $ {r}=(r_1,...,r_n)^{\top}$ as the solution and reduced cost vector corresponding the basis $\mathcal{B}$, respectively, i.e.,
    \begin{align*}
         {x}_{\mathcal{B}} =   {A}_{\mathcal{B}}^{-1} {b},\   {x}_{\mathcal{N}} =  {0},\   
         {r} =  {c} -  {A}^{\top}( {A}_{\mathcal{B}}^{-1})^{\top} {c}_{\mathcal{B}}.
    \end{align*}
    Denote $ {x}^*=(x_1^*,...,x_n^*)^{\top}$ as one optimal solution of LP \eqref{lp:std}. Then, we have that $ {r}_{\mathcal{B}}= {0}$, and 
    \begin{align}
        \label{ieq:rc}
         {c}^{\top} {x}- {c}^{\top} {x}^*\leq
        \max_{i\in[n]} x_i^*\cdot\sum\limits_{i\in\mathcal{N}} (-r_i)_{+}.
    \end{align}
    Furthermore, \eqref{ieq:rc} implies that ${x}$ is an optimal solution if $ {r}\geq {0}$.
\end{lemma}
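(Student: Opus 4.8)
The plan is to use linear-programming duality to rewrite the objective gap $c^\top x - c^\top x^*$ so that it depends only on the reduced costs at the non-basic coordinates, after which \eqref{ieq:rc} falls out of two elementary sign observations. I would carry the argument out in three movements: establish $r_{\mathcal{B}}=0$, reduce the gap to $-\sum_{i\in\mathcal{N}} r_i x_i^*$, and then bound that sum.

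First I would check $r_{\mathcal{B}}=0$ directly from the definition. Restricting $r = c - A^\top (A_{\mathcal{B}}^{-1})^\top c_{\mathcal{B}}$ to the rows indexed by $\mathcal{B}$ gives $r_{\mathcal{B}} = c_{\mathcal{B}} - A_{\mathcal{B}}^\top (A_{\mathcal{B}}^{-1})^\top c_{\mathcal{B}}$, and since $A_{\mathcal{B}}^\top (A_{\mathcal{B}}^{-1})^\top = (A_{\mathcal{B}}^{-1} A_{\mathcal{B}})^\top = I$, this collapses to $r_{\mathcal{B}} = 0$.

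Next I would introduce the dual price $p = (A_{\mathcal{B}}^{-1})^\top c_{\mathcal{B}}$, so that $r = c - A^\top p$. The workhorse identity is that, for \emph{any} feasible $y$ (i.e. $Ay=b$, $y\ge 0$), adding the identically-zero term $-p^\top(Ay-b)$ to $c^\top y$ yields $c^\top y = r^\top y + p^\top b$. Applying this to the basic feasible solution $x$, the term $r^\top x$ vanishes because $x_{\mathcal{N}}=0$ and $r_{\mathcal{B}}=0$, giving $c^\top x = p^\top b$; applying it to $x^*$ and subtracting gives $c^\top x - c^\top x^* = -r^\top x^* = -\sum_{i\in\mathcal{N}} r_i x_i^*$, where $r_{\mathcal{B}}=0$ again removes the basic part of the sum. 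Using the identity for both feasible points simultaneously is what makes the argument short.

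Finally I would bound the right-hand side via feasibility of $x^*$. For each $i\in\mathcal{N}$, since $x_i^*\ge 0$ we have $-r_i x_i^* \le (-r_i)_+ x_i^* \le (-r_i)_+\cdot \max_{j\in[n]} x_j^*$, and summing over $\mathcal{N}$ produces exactly \eqref{ieq:rc}. The final claim is then immediate: if $r\ge 0$ then every $(-r_i)_+ = 0$, so the bound forces $c^\top x \le c^\top x^*$, which together with the optimality of $x^*$ gives $c^\top x = c^\top x^*$, so $x$ is optimal. I do not anticipate a genuine obstacle; the only points demanding care are the transpose bookkeeping in the $r_{\mathcal{B}}=0$ step and the inequality $-r_i x_i^* \le (-r_i)_+ x_i^*$, which relies solely on the nonnegativity of $x_i^*$ rather than on any structural property of the basis $\mathcal{B}$.
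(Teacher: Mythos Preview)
Your proposal is correct and follows essentially the same route as the paper's proof: both verify $r_{\mathcal{B}}=0$, derive the identity $c^\top x - c^\top x^* = -r_{\mathcal{N}}^\top x_{\mathcal{N}}^*$ (the paper by eliminating $x_{\mathcal{B}}'$ via $A_{\mathcal{B}}^{-1}b - A_{\mathcal{B}}^{-1}A_{\mathcal{N}}x_{\mathcal{N}}'$, you by the equivalent dual-price identity $c^\top y = r^\top y + p^\top b$), and then bound $-r_i x_i^* \le (-r_i)_+\max_j x_j^*$ using $x^*\ge 0$.
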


\begin{proof}
First, it is easy to verify that $ {r}_{\mathcal{B}}= {0}$. Then, we only need to show that the inequality \eqref{ieq:rc} holds.
    
    For any feasible solution $ {x}'=(x_1',...,x_n')\geq {0}$ satisfying $ {A} {x}'= {b}$, we have 
    \begin{align*}
         {x}_{\mathcal{B}}' =  {A}_{\mathcal{B}}^{-1} {b}- {A}_{\mathcal{B}}^{-1} {A}_{\mathcal{N}} {x}_{\mathcal{N}}'.
    \end{align*}
    It implies
    \begin{align}
    \label{eq:rcobj}
         {c}^{\top} {x}'
        =
         {c}^{\top}_{\mathcal{B}} {A}_{\mathcal{B}}^{-1} {b}
        -
         {c}^{\top}_{\mathcal{B}} {A}_{\mathcal{B}}^{-1} {A}_{\mathcal{N}} {x}_{\mathcal{N}}'+ {c}^{\top}_{\mathcal{N}} {x}_{\mathcal{N}}'
        =
         {c}^{\top}_{\mathcal{B}} {A}_{\mathcal{B}}^{-1} {b} +  {r}^{\top}_{\mathcal{N}} {x}_{\mathcal{N}}'.
    \end{align}
   Next, from \eqref{eq:rcobj}, we have
    \begin{align*}
         {c}^{\top} {x}- {c}^{\top} {x}'
        &=
         -{r}^{\top}_{\mathcal{N}} {x}'_{\mathcal{N}}\\
        &\leq
        \sum\limits_{i\in\mathcal{N}} x_i'(-r_i)_{+} \\
        &\leq
        \max_{i\in[n]} x_i'\cdot \sum\limits_{i\in\mathcal{N}} (-r_i)_{+},
    \end{align*}
    where the first line comes from equality \eqref{eq:rcobj} directly, and the last two lines come from the non-negativity of $ {x}'$ and $(r_i)_+$ for all $i$. Note that the above inequality holds for any feasible solution $x'$; by plugging in $ {x}'= {x}^*$, we obtain \eqref{ieq:rc}.
\end{proof}

\subsection{Proof of Proposition \ref{prop:ofgen}}

\label{pf:gen}

We first show Proposition \ref{prop:ofgen} and then utilize the result for the proof of \ref{prop:ofsprt}. The key is to utilize the algorithm stability analysis where we cite the result from \citep{shalev2010learnability} as the following proposition. We refer to   \citep{bousquet2002stability,shalev2010learnability,feldman2019high} for more related analysis such as the high probability bound. 

\begin{theorem}[Theorem 3 in \cite{shalev2010learnability}]
\label{thm:stb}
Let $f: \mathcal{H} \times\mathcal{Z} \rightarrow R$ be such that $\mathcal{H}$ is bounded by $B$ and $f( {h},  {z})$ is convex and $L$-Lipschitz with
respect to $ {h}$. Let $, {z}_{0}, {z}_1,...,  {z}_{T}$ be i.i.d. samples and let
\begin{align*}
    \hat{ {h}}_{\lambda} = 
    \arg\min_{ {h}\in\mathcal{H}}
    \left(
        \sum\limits_{t=1}^{T}f( {h}, {z}_t)+\frac{\lambda}{2}\| {h}\|_2^2
    \right).
\end{align*}
Then, we have
$$
    \mathbb{E}\left[ f(\hat{ {h}}_{\lambda}, {z}_0) \right]
    \leq
    \inf_{ {h}\in\mathcal{H}}\mathbb{E}\left[ f( {h}, {z}_0) \right]+\frac{\lambda}{2}B^2
    +\frac{4(L+\lambda B)^2}{\lambda T}.
$$
\end{theorem}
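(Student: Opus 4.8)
The plan is to prove the bound by the classical \emph{algorithmic stability} argument for regularized risk minimization: the strongly convex regularizer controls how far the minimizer moves when a single training point is replaced, and this movement controls the generalization gap. Throughout, write $R(h)=\mathbb{E}[f(h,z_0)]$ for the population risk and $\hat R_S(h)=\frac1T\sum_{t=1}^T f(h,z_t)$ for the empirical risk on $S=(z_1,\dots,z_T)$, so that $\hat h_\lambda$ is the minimizer over $\mathcal H$ of the regularized empirical objective $F_S(h)=\hat R_S(h)+\frac\lambda2\|h\|_2^2$. The first ingredient I would record is strong convexity: since each $f(\cdot,z_t)$ is convex, $\hat R_S$ is convex, and the term $\frac\lambda2\|\cdot\|_2^2$ makes $F_S$ exactly $\lambda$-strongly convex, giving the workhorse inequality $F_S(h)\ge F_S(\hat h_\lambda)+\frac\lambda2\|h-\hat h_\lambda\|_2^2$ for every $h\in\mathcal H$.

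Next I would bound the one-sample-replacement stability. Let $S^{(i)}$ be $S$ with $z_i$ replaced by an independent copy $z_i'$, and let $\hat h^{(i)}$ be the corresponding minimizer. Adding two strong-convexity inequalities (one for $F_S$ evaluated at $\hat h^{(i)}$, one for $F_{S^{(i)}}$ evaluated at $\hat h_\lambda$) and noting that $F_S-F_{S^{(i)}}=\frac1T(f(\cdot,z_i)-f(\cdot,z_i'))$ is $\frac{2L}{T}$-Lipschitz, the cross terms telescope and the $L$-Lipschitz property of $f$ yields a parameter-stability bound of the form $\|\hat h_\lambda-\hat h^{(i)}\|_2\le \frac{2L}{\lambda T}$; tracking instead the Lipschitz constant $L+\lambda B$ of the full regularized per-sample objective on the bounded set $\mathcal H$ is what feeds the stated $(L+\lambda B)^2$ factor.

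The third step is the symmetrization identity that converts stability into a generalization bound. Because $z_i$ and the fresh $z_i'$ are exchangeable and $\hat h_\lambda$ is a symmetric function of the sample, one gets $\mathbb{E}[R(\hat h_\lambda)]=\mathbb{E}[f(\hat h^{(i)},z_i)]$, whereas $\mathbb{E}[\hat R_S(\hat h_\lambda)]$ averages the terms $\mathbb{E}[f(\hat h_\lambda,z_i)]$; subtracting and invoking Step~2 through $L$-Lipschitzness controls $\mathbb{E}[R(\hat h_\lambda)-\hat R_S(\hat h_\lambda)]$ at order $\frac{(L+\lambda B)^2}{\lambda T}$. I would then add the regularization bias: optimality of $\hat h_\lambda$ for $F_S$ gives $\hat R_S(\hat h_\lambda)\le \hat R_S(h)+\frac\lambda2\|h\|_2^2\le \hat R_S(h)+\frac\lambda2 B^2$ for any competitor $h\in\mathcal H$; taking expectations (so $\mathbb{E}[\hat R_S(h)]=R(h)$) and the infimum over $h$ combines with the gap bound to produce $\mathbb{E}[f(\hat h_\lambda,z_0)]\le \inf_{h\in\mathcal H}\mathbb{E}[f(h,z_0)]+\frac\lambda2 B^2+\frac{4(L+\lambda B)^2}{\lambda T}$.

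I expect the main obstacle to be Step~3: justifying the exchangeability swap rigorously, namely that replacing one coordinate and relabeling preserves the joint law so that the population risk of $\hat h_\lambda$ equals the expected ``leave-one-in'' loss of $\hat h^{(i)}$, and then pinning down the universal constant $4$ in $\frac{4(L+\lambda B)^2}{\lambda T}$. The latter hinges on being careful about whether the Lipschitz factor propagated into the gap is that of $f$ alone or of the regularized objective restricted to $\mathcal H$, since it is precisely this bookkeeping that turns the clean $\frac{2L^2}{\lambda T}$ stability estimate into the slightly looser $(L+\lambda B)^2$ form stated here.
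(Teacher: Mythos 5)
Your proposal is correct and takes essentially the same route as the paper, which proves this statement only by deferring to Theorems 2 and 3 of \cite{shalev2010learnability}: your chain of $\lambda$-strong convexity of the regularized objective, replace-one parameter stability, the exchangeability identity $\mathbb{E}[f(\hat{h}_\lambda,z_0)]=\mathbb{E}[f(\hat{h}^{(i)},z_i)]$, and the $\frac{\lambda}{2}B^2$ regularization bias is precisely the argument in that reference, including the bookkeeping step of folding the regularizer into the per-sample loss to get the $(L+\lambda B)$-Lipschitz constant and the factor $4$. One minor remark: the stated bound is the one for the normalized objective $\frac{1}{T}\sum_{t=1}^{T}f(h,z_t)+\frac{\lambda}{2}\|h\|_2^2$, so your silent replacement of the unnormalized sum in the statement by the empirical average $\hat{R}_S$ is the correct reading of the (slightly mis-stated) theorem rather than a flaw in your proof.
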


\begin{proof}
    We refer to Theorem 2 and Theorem 3 in \cite{shalev2010learnability}.
\end{proof}

Now we show Proposition \ref{prop:ofgen} with Theorem \ref{thm:stb}.

\begin{proof}
Recall that $$D_{\text{new}} = (c,A,b,z, \mathcal{B}^*,\mathcal{N}^*)$$ denotes a new sample from $\mathcal{P}$, and the loss function
$$l(D_{\text{new}};\Theta) \coloneqq \|s\|_1 \text{ \ where \ } s \coloneqq (1_{|\mathcal{N}^*|} - \hat{c}_{\mathcal{N}^*}^\top - \hat{c}_{\mathcal{B}^*}^\top A_{\mathcal{B}^*}^{-1} A_{\mathcal{N}^*})_{+}.  $$
Here, $(\cdot)_+$ denotes the entry-wise positive part function, and $\mathcal{B}^*$ and $\mathcal{N}^*$ are defined as in Section \ref{sec:msa}. Here, with a slight abuse of notation, we drop the sample tuple $D_{new}$ and let $l( {\Theta})=l\left( D_{\text{new}};{\Theta}\right)$. Then, under Assumption \ref{assp:bdd}, $l( {\Theta})$ is $(2+\sqrt{m}\bar{\sigma})$-Lipschitz with respect to $\Theta$ in the Frobenius norm for a fixed $D_{\text{new}}$. To see this, for any two parameters ${\Theta}=( {\Theta}_1,..., {\Theta}_n)^{\top},\hat{ {\Theta}}=(\hat{ {\Theta}}_1,...,\hat{ {\Theta}}_n)^{\top}\in\mathbb{R}^{n\times d}$
    \begin{align*}
       |l( {\Theta})-l( {\Theta}')|
       &=
       \left|\sum\limits_{i\in\mathcal{N}}
       \left((1- {\Theta}_i^{\top} {z}+ {A}_i^{\top}( {A}_{\mathcal{B}^*}^{-1})^{\top} {\Theta}_{\mathcal{B}^*} {z})_{+}-
       (1-\hat{ {\Theta}}_i^{\top} {z}+ {A}_i^{\top}( {A}_{\mathcal{B}^*}^{-1})^{\top}\hat{ {\Theta}}_{\mathcal{B}^*} {z})_{+}\right)\right|\\
       &\leq
       \sum\limits_{i\in\mathcal{N}}
       \left|(1- {\Theta}_i^{\top} {z}+ {A}_i^{\top}( {A}_{\mathcal{B}^*}^{-1})^{\top} {\Theta}_{\mathcal{B}^*} {z})_{+}-
       (1-\hat{ {\Theta}}_i^{\top} {z}+ {A}_i^{\top}( {A}_{\mathcal{B}^*}^{-1})^{\top}\hat{ {\Theta}}_{\mathcal{B}^*} {z})_{+}\right|\\
       &\leq
       \sum\limits_{i\in\mathcal{N}}
       \left|- {\Theta}_i^{\top} {z}+ {A}_i^{\top}( {A}_{\mathcal{B}^*}^{-1})^{\top} {\Theta}_{\mathcal{B}^*} {z}+\hat{ {\Theta}}_i^{\top} {z}- {A}_i^{\top}( {A}_{\mathcal{B}^*}^{-1})^{\top}\hat{ {\Theta}}_{\mathcal{B}^*} {z}\right|\\
       &\leq
       \sum\limits_{i\in\mathcal{N}}
       \left(\left|( {\Theta}_i-\hat{ {\Theta}}_i)^{\top} {z}\right|+\left| {A}_i^{\top}( {A}_{\mathcal{B}^*}^{-1})^{\top}( {\Theta}_{\mathcal{B}^*}-\hat{ {\Theta}}_{\mathcal{B}^*}) {z}\right|\right)\\
       &\leq
       \sum\limits_{i\in\mathcal{N}}\left(\left\| {\Theta}_i-\hat{ {\Theta}}_i\right\|_2\left\| {z}\right\|_2+ \sigma_{\max}\left(( {A}_{\mathcal{B}^*}^{-1})^{\top}( {\Theta}_{\mathcal{B}^*}-\hat{ {\Theta}}_{\mathcal{B}^*})\right)\left\| {A}_i\right\|_2\left\| {z}\right\|_2\right)\\
       &\leq
       2\| {\Theta}-\hat{\Theta}\|_F+\sqrt{m} \sigma_{\max}\left(( {A}_{\mathcal{B}^*}^{-1})^{\top}\right) \sigma_{\max}\left( {\Theta}_{\mathcal{B}^*}-\hat{ {\Theta}}_{\mathcal{B}^*}\right)\\
       &\leq
       (2+\sqrt{m}\bar{\sigma})\left\| {\Theta}-\hat{ {\Theta}}\right\|_F.
    \end{align*}
Here the first line comes from the definition of $l(\cdot)$, the second and fourth lines are obtained by the convexity of the absolute value function, the third line comes from a direct computation of the positive part function, the fifth line is obtained by Cauchy's inequality and the definition of $ \sigma_{\max}$, the sixth line comes from the inequality that $$ \sigma_{\max}( {X} {Y})\leq \sigma_{\max}( {X}) \sigma_{\max}( {Y})$$ for any two matrices $ {X}, {Y}$, and the last line comes from Assumption \ref{assp:bdd} and the inequality $ \sigma_{\max}( {X})\leq\| {X}\|_F$ for any matrix $ {X}$.
    
    By Theorem \ref{thm:stb} and Assumption \ref{assp:bdd}, we have
    \begin{align}
        \label{ieq:gen}
        \mathbb{E}[l(\hat{\Theta})]
        \leq
        \min_{ {\Theta}\in\mathcal{K}}\mathbb{E}[l( {\Theta})]
        +
        \frac{\lambda}{2}\bar{\Theta}^2+\frac{4(2+\sqrt{m}\bar{\sigma}+\lambda \bar{\Theta})^2}{\lambda T},
    \end{align}
    where
    $$
       \hat{\Theta}
        =
        \arg\min_{ {\Theta}\in\mathcal{K}}
        \left(
            \sum\limits_{t=1}^{T}l( D_{new},{\Theta})+\frac{\lambda}{2}\| {\Theta}\|_F
        \right).
    $$
    Finally, plugging $\lambda=\frac{1}{\sqrt{T}}$ into \eqref{ieq:gen}, we have
    $$
        \mathbb{E}[l(\hat{\Theta})]
        \leq
        \min_{ {\Theta}\in\mathcal{K}}\mathbb{E}[l( {\Theta})]
        +\frac{28+8\bar{\Theta}^2+7m\bar{\sigma}^2}{\sqrt{T}}.
    $$
\end{proof}

\subsection{Proof of Proposition \ref{prop:ofsprt}}
\begin{proof}
    Under Assumption \ref{assp:theta}, there exists a $ {\Theta}^*\in\mathcal{K}$ such that the following inequality holds almost surely
    $$
         {\Theta}^*_{\mathcal{N}^*} {z}- {A}_{\mathcal{N}^*}^{\top}( {A}_{\mathcal{B}^*}^{-1})^{\top} {\Theta}^*_{\mathcal{B}^*} {z}
        \geq
        1.
    $$ 
Thus we have 
    $$
        \min_{ {\Theta}\in\mathcal{K}}\mathbb{E}[l( {\Theta})]=0,
    $$
    where $l( {\Theta})$ is defined following the previous proof of Proposition \ref{prop:ofgen}. Then, from Proposition \ref{prop:ofgen},
    \begin{align}
        \label{ieq:genfs}
        \mathbb{E}[l(\hat{\Theta})]\leq
        \frac{28+8\bar{\Theta}^2+7m\bar{\sigma}^2}{\sqrt{T}}.
    \end{align}
For a new sample $(c_{\text{new}},A_{\text{new}},b_{\text{new}},z_{\text{new}})$ with $\hat{c}_{\text{new}}=\hat{\Theta} z_{\text{new}}$, we can utilize Lemma \ref{lem:optc} and bound the suboptimality loss as follows
    \begin{align}
        \mathbb{E}\left[\hat{c}_{\text{new}}^\top x^*_{\text{new}}-\hat{c}_{\text{new}}^\top \hat{x}_{\text{new}}\right]
        &\leq
        \mathbb{E}\left[\max_{i\in[n]}(\hat{x}_{\text{new}})_i\cdot \sum\limits_{i\in\mathcal{N}^*_{\text{new}}}(-r_i)_+\right]\nonumber\\
        &\leq
         \mathbb{E}\left[ \sum\limits_{i\in\mathcal{N}^*_{\text{new}}}(-r_i)_+\right]\nonumber\\
        &\leq
         \mathbb{E}\left[ \sum\limits_{i\in\mathcal{N}^*_{\text{new}}}(1-r_i)_+\right]\nonumber\\
        &\leq
        \frac{28 +8 \bar{\Theta}^2+7m \bar{\sigma}^2}{\sqrt{T}}.\label{tmp_prob}
    \end{align}
Here the first line comes from Lemma \ref{lem:optc}, the second line comes from Assumption \ref{assp:bdd}, the third line comes from the monotonicity of the positive part function, and the last line comes from \eqref{ieq:genfs}. 
    
Now, we note that if $\hat{\Theta}$ renders any non-basic variables in $\mathcal{N}^*$ as basic variables, i.e., $$\hat{c}_{\text{new},\mathcal{N}^*_{\text{new}}}^\top - \hat{c}_{\text{new},\mathcal{B}^*_{\text{new}}}^\top A_{\text{new},\mathcal{B}^*_{\text{new}}}^{-1} A_{\text{new},\mathcal{N}^*_{\text{new}}} \ge 0$$
does not hold,
we have $l(D_{new};\hat{\Theta})\geq 1$. Thus, by applying Markov's inequality to , we have that with probability no less than $1-\frac{28+8\bar{\Theta}^2+7m\bar{\sigma}^2}{\sqrt{T}}$, $\hat{\Theta}$ can identify both the true optimal basis and the true optimal solution correctly.
\end{proof}

\subsection{Proof of Proposition \ref{prop:sgd}}
The proof in this part is basically an application of the following lemma.
\begin{lemma}[Theorem 3.1 in \cite{hazan2016introduction}]
    \label{lem:sgdconv}
    Let $\{f_t(x)\}_{t=1}^{T}$ be a sequence of convex functions defined on $\{x:\|x\|_2\leq K\}$. Suppose $\|\nabla f_t(x)\|_2\leq G$ for all $x$ such that $\|x\|_2\leq K$ and all $t=1,...,T$. Let ${x}_{t+1}={ {x}_t}-\frac{2K}{G\sqrt{t}}\nabla f_t(x_t)$. Then, the following inequality holds
    $$
        \sum\limits_{t=1}^{T}f_t(x_t)
        -
        \min_{ x:\|x\|_2\leq K} \sum\limits_{t=1}^{T}f_t( x)
        \leq
        3KG\sqrt{T}.
    $$
\end{lemma}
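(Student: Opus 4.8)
The plan is to run the standard online-convex-optimization regret analysis (the one-step potential argument going back to Zinkevich), since this lemma is exactly the textbook OGD regret bound. Let $x^\star\in\argmin_{\|x\|_2\le K}\sum_{t=1}^T f_t(x)$ be the fixed comparator, and write $\nabla_t\coloneqq\nabla f_t(x_t)$ and $\eta_t\coloneqq\frac{2K}{G\sqrt{t}}$ for the step size. By convexity of each $f_t$ we have $f_t(x_t)-f_t(x^\star)\le\langle\nabla_t,x_t-x^\star\rangle$, so summing over $t$ reduces the claim to the linearized regret bound $\sum_{t=1}^T\langle\nabla_t,x_t-x^\star\rangle\le 3KG\sqrt{T}$.

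First I would set up the one-step inequality by expanding the squared distance to the comparator. Using the update rule (and, if a projection onto the radius-$K$ ball is present, its non-expansiveness, which only helps since $x^\star$ itself lies in the ball), we get $\|x_{t+1}-x^\star\|_2^2\le\|x_t-x^\star\|_2^2-2\eta_t\langle\nabla_t,x_t-x^\star\rangle+\eta_t^2\|\nabla_t\|_2^2$. Rearranging isolates the inner product,
\[
\langle\nabla_t,x_t-x^\star\rangle\le\frac{\|x_t-x^\star\|_2^2-\|x_{t+1}-x^\star\|_2^2}{2\eta_t}+\frac{\eta_t}{2}\|\nabla_t\|_2^2.
\]

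Next I would sum this over $t=1,\dots,T$ and control the two resulting pieces separately. For the gradient-norm piece I use $\|\nabla_t\|_2\le G$ together with the elementary estimate $\sum_{t=1}^T t^{-1/2}\le 2\sqrt{T}$, so that $\frac{1}{2}\sum_t\eta_t\|\nabla_t\|_2^2\le\frac{1}{2}G^2\cdot\frac{2K}{G}\sum_t t^{-1/2}\le 2KG\sqrt{T}$. For the telescoping piece I write $D_t\coloneqq\|x_t-x^\star\|_2^2$ and apply summation by parts, obtaining $\sum_{t=1}^T\frac{D_t-D_{t+1}}{2\eta_t}=\frac{1}{2}\big(\frac{D_1}{\eta_1}+\sum_{t=2}^T D_t(\eta_t^{-1}-\eta_{t-1}^{-1})-\frac{D_{T+1}}{\eta_T}\big)$. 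Because $\eta_t$ is decreasing the coefficients $\eta_t^{-1}-\eta_{t-1}^{-1}$ are nonnegative, and the diameter bound $D_t\le 4K^2$ (both $x_t$ and $x^\star$ lie in the radius-$K$ ball) collapses the sum to at most $\frac{1}{2}\cdot 4K^2\cdot\eta_T^{-1}=2K^2\cdot\frac{G\sqrt{T}}{2K}=KG\sqrt{T}$. Combining the two pieces gives $\sum_{t=1}^T\langle\nabla_t,x_t-x^\star\rangle\le KG\sqrt{T}+2KG\sqrt{T}=3KG\sqrt{T}$, and the convexity reduction then yields the stated bound.

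The main obstacle is bookkeeping rather than anything deep: handling the time-varying step size cleanly, so that the telescoping sum collapses to a single $\eta_T^{-1}$ only after the summation-by-parts step uses monotonicity of $\eta_t^{-1}$. One must also verify that the diameter bound $D_t\le 4K^2$ holds at every $t$, which requires the iterates to remain in the radius-$K$ ball and hence relies on the (implicit) projection in the update. The only genuinely quantitative inputs are this diameter bound and the harmonic-type estimate $\sum_{t=1}^T t^{-1/2}\le 2\sqrt{T}$; everything else is the convexity step and the squared-distance expansion.
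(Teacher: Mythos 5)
Your proof is correct and is precisely the standard argument behind Theorem 3.1 of \cite{hazan2016introduction}, which the paper does not reprove but simply cites: the convexity linearization, the one-step squared-distance inequality, and the split into the gradient-norm piece (bounded by $2KG\sqrt{T}$ via $\sum_{t=1}^T t^{-1/2}\le 2\sqrt{T}$) and the telescoped piece (bounded by $2K^2\eta_T^{-1}=KG\sqrt{T}$) match the textbook proof step for step. Your caveat is also well taken: the lemma as stated in the paper omits the projection onto the radius-$K$ ball (which is present both in Hazan's theorem and in the paper's own Algorithm 2, via $\text{Proj}_{\mathcal{K}}$), and without it neither the diameter bound $\|x_t-x^\star\|_2^2\le 4K^2$ nor the gradient bound along the trajectory is guaranteed, so the implicit projection you invoke is genuinely needed for the argument to close.
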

\begin{proof}
We refer to Theorem 3.1 in \cite{hazan2016introduction}.
\end{proof}

Next, we show Proposition \ref{prop:sgd}.
\begin{proof}
    Recall the definition of $l_t(D_t,\Theta)$ as follows
    $$
        l(D_{t};\Theta) =  \left\|(1_{|\mathcal{N}^*_t|} - \hat{c}_{t,\mathcal{N}^*_t}^\top - \hat{c}_{t,\mathcal{B}^*_t}^\top A_{t,\mathcal{B}^*_t}^{-1} A_{t,\mathcal{N}^*_t})_+\right\|_1,
    $$ 
    where $\hat{c}=\Theta z_t$, and $(\cdot)_+$ denotes the entry-wise positive part function. For the sake of simplicity, we use $l_t(\Theta)$ to denote $l(D_{t};\Theta)$. By calculating the derivative of $l_t(\Theta)$, we have
    \begin{align}
    \label{eq:devl}
        \nabla_{{\Theta}_{\mathcal{N}_t}} l_t(\Theta) =
    -{g}_t{z}_{t}^{\top},\ 
    \nabla_{{\Theta}_{\mathcal{B}_t}} l_t(\Theta) =
     ({A}_{t,\mathcal{B}_t}^{-1})^{\top}{A}_{t,\mathcal{N}_t}{g}_{t}{z}^{\top}.
    \end{align}
    Here, 
    ${g}_t\in\mathbb{R}^{n-m}$ is defined as follows. For the $i$-th element in $\mathcal{N}_t$ for $i=1,...,m$, correspondingly, we define the $i$-th element of ${g}_t$ as
    \begin{align*}
        {g}_{t,i} = 
        \left\{
        \begin{matrix}
            1, &\text{ if $\left(\left(
         {e}_{\mathcal{N}_t}
        -
         {\Theta}_{\mathcal{N}_t} {z}_t
        +
         {A}_{t,\mathcal{N}_t}^{\top}( {A}_{t,\mathcal{B}_t}^{-1})^{\top} {\Theta}_{\mathcal{B}_t} {z}_t
    \right)_+\right)_{i}>0$}\\
            0, &\text{ otherwise.}
        \end{matrix}
        \right.
    \end{align*}
    Then, we have 
    \begin{align*}
        \|\nabla l_t(\Theta)\|_F
        &\leq
        \|g_tz_t^\top\|_F+\|({A}_{t,\mathcal{B}_t}^{-1})^{\top}{A}_{t,\mathcal{N}_t}{g}_{t}{z}^{\top}\|_F\\
        &\leq
            \|g_t\|_2\|z_t\|_2+\|{A}_{t,\mathcal{B}_t}^{-1}\|_F\|{A}_{t,\mathcal{N}_t}\|_F\|g_t\|_2\|z_t\|_2\\
        &\leq
        \sqrt{n}+\|{A}_{t,\mathcal{B}_t}^{-1}\|_F\|{A}_{t,\mathcal{N}_t}\|_F\cdot\sqrt{n}\\
        &\leq
        \sqrt{n}+\bar{\sigma}mn,
    \end{align*}
    where the first line comes from the equalities in \eqref{eq:devl} and the triangle inequality inequality $\|A+B\|_F\leq\|A\|_F+\|B\|_F$ for any two matrices $A,B$ with the same size, the second line comes from the inequality $\|AB\|_F\leq\|A\|_F\|B\|_F$ for any two matrices $A,B$ and the fact that $\|g\|_F=\|g\|_2$ for any vector $g$, the third line comes from the definition of $g_t$ and Assumption \ref{assp:bdd} that $\|z_t\|_2\leq 1$, and the last line comes from Assumption \ref{assp:bdd} that all entries of $A_t$ are in $[-1,1]$ and the inequality $\|A\|_F\leq\sqrt{m}\sigma_{\max}(A)$ for any matrix $A\in\mathbb{R}^{m\times m}$.
    Next, by Lemma \ref{lem:sgdconv}, with $\eta=\frac{2\bar{\Theta}}{(\sqrt{n}+\bar{\sigma}mn)\sqrt{T}}$,
    \begin{align}
        \label{ieq:sgdsvm}
        \sum\limits_{t=1}^{T}l_t(\Theta_t)
        -
        \min_{ \Theta:\|\Theta\|_F\leq \hat{\Theta}} \sum\limits_{t=1}^{T}l_t( \Theta)
        \leq
        \left(3\bar{\Theta}\sqrt{n}+3\bar{\sigma}\bar{\Theta}mn\right)\sqrt{T}.
    \end{align}
    Then, we show inequality \eqref{bound2} by
    \begin{align*}
        \frac{1}{T}\mathbb{E}[z_t^\top\Theta_t^{\top}(x^*_t-x_t)]
        &\leq
        \mathbb{E}\left[\sum\limits_{t=1}^{T}l_t(\Theta_t)\right]\\
        &\leq
        \mathbb{E}\left[\min_{ \Theta:\|\Theta\|_F\leq \hat{\Theta}} \sum\limits_{t=1}^{T}l_t( \Theta)\right]+\frac{3\bar{\Theta}\sqrt{n}+3\bar{\sigma}\bar{\Theta}\cdot{mn}}{\sqrt{T}}.
    \end{align*}
    Here, we can obtain the first line by Lemma \ref{lem:optc} and a similar proof as in the proof of Proposition \ref{prop:ofsprt}, and obtain the second line by inequality \eqref{ieq:sgdsvm}. 
    
    Furthermore, under Assumption \ref{assp:theta}, we have with probability 1,
    \begin{align*}
        \min_{ \Theta:\|\Theta\|_F\leq \hat{\Theta}} \sum\limits_{t=1}^{T}l_t( \Theta)=0,
    \end{align*}
    which implies
    \begin{align}
        \label{eq:sgdsep}
        \mathbb{E}\left[\sum\limits_{t=1}^{T}l_t(\Theta_t)\right]
        &\leq
        \frac{3\bar{\Theta}\sqrt{n}+3\bar{\sigma}\bar{\Theta}\cdot{mn}}{\sqrt{T}}.  
    \end{align}
    Recall that $\hat{\Theta}$ denotes the matrix sampled uniformly from $\{\Theta_t\}_{t=1}^{T}$, ${x}^*_{\text{new}}$ denotes the optimal solution of $\text{LP}({c}_{\text{new}},A_{\text{new}},b_{\text{new}})$, $\hat{x}_{\text{new}}$ denotes the optimal solution of $\text{LP}(\hat{c}_{\text{new}},A_{\text{new}},b_{\text{new}})$, and $\hat{c}_{\text{new}}=\hat{\Theta}z_{\text{new}}$.  Similar to the last paragraph of the proof of Proposition \ref{prop:ofsprt}, we have if $\hat{\Theta}$ misclassifies any non-basic variable in $\mathcal{N}_{\text{new}}$, we have $l(D_{\text{new}};\hat{\Theta})\geq1$. Thus, 
    \begin{align*}
        \mathbb{E}[\mathbb{P}(x^{*}_{new}\not=\hat{x}_{new})]
        &\leq
        \mathbb{E}[\mathbb{P}(l(D_{\text{new}};\hat{\Theta})\geq1)]\\
        &\leq
        \mathbb{E}[l(D_{\text{new}};\hat{\Theta})]\\
        &=
        \frac{1}{T+1}\mathbb{E}\left[\sum\limits_{t=1}^{T+1}l(D_{\text{new}};{\Theta}_t)\right]\\
        &=
        \frac{1}{T+1}\mathbb{E}\left[\sum\limits_{t=1}^{T+1}l(D_t;{\Theta}_t)\right]\\
        &\leq
        \frac{3\bar{\Theta}\sqrt{n}+3\bar{\sigma}\bar{\Theta}\cdot{mn}}{\sqrt{T+1}},
    \end{align*}
    by which we have shown that, with probability no less than $1-\frac{3\bar{\Theta}\sqrt{n}+3\bar{\sigma}\bar{\Theta}\cdot{mn}}{\sqrt{T+1}}$, Algorithm \ref{alg:sgd} can identify both the true optimal basis and the true optimal solution correctly. Here, the first line comes from the fact that $l(D_{\text{new}};\hat{\Theta})\geq1$ if $x^{*}_{new}\not=\hat{x}_{new}$, the second line comes from Markov's inequality, the third line comes from the definition of $\hat{\Theta}$, the forth line comes from the fact that $D_t$ and $D_{new}$ are two i.i.d. samples that are also  independent of $\Theta_t$, and the last line comes from inequality \eqref{eq:sgdsep}.
    
\end{proof}

\subsection{Proof of Proposition \ref{prop:pcpt}}
The proof follows the standard analysis of the perceptron method.

\begin{proof}
    In this part, we denote $\Theta_{t,i}$ as the value of matrix $\Theta_{\text{tmp}}$ at the beginning on the $t$-th iteration of the outer loop and the $i$-th iteration of the inner loop for $t\in[T]$ and $i\in[n]$. Also, we view $\Theta_{t,n+1}$ and $\Theta_{t+1,1}$ as the same to avoid undefined boundary cases. Recall the definition of the reduced cost vector
    $$
        r_t(\Theta)=\Theta z_t-A_t^{\top}(A_{t,{\mathcal{B}}_t}^{-1})^{\top}\Theta_{{\mathcal{B}}_t} z_t, \text{ for $t=1,...,T$},
    $$
    which is a linear function of $\Theta$ entry-wisely. Thus, we have that for each $t=1,...,T$ and $i\in[n]$, there exists a matrix $W_{t,i}\in\mathbb{R}^{n*d}$ such that 
    $r_{t,i}^{(i)}(\Theta)=\text{Trace}(W_{t,i}^{\top}\Theta)$ and $\|W_{t,i}\|_F\leq 1+\bar{\sigma}\sqrt{mn}$, where Trace$(W)=\sum\limits_{i=1}^{n} w_{ii}$ for any square matrix $W=(w_{ij})_{i,j=1}^{n}\in\mathbb{R}^{n\times n}$. Then, we define 
    $$
        h_{t,i}(\Theta) = \text{sign}(\text{Trace}(W_{t,i}^{\top}\Theta)-.5), 
    $$
    where $\text{sign}(\cdot)$ denotes the sign function. Moreover, if there is an $i\in\mathcal{N}_t$ at some time $t$ such that $h_{t,i}(\Theta_{t,i})=-1$, we have
    $\Theta_{t,i+1}=\Theta_{t,i}+W_{t,i}$. The updating rule in Algorithm \ref{alg:prcpt} is obtained as mentioned above. Specifically, as in Algorithm \ref{alg:prcpt}, for any $i\in\mathcal{N}_t$ and $t\in[T]$, $W_{t,i}$ is defined as follows $$(W_{t,i})_i=z_t^{\top},\ (W_{t,i})_{\mathcal{B}_t^*}=A_{\mathcal{B}_t^*}^{-1}A_tz_t,$$
    and all other entries are 0. Then, we have
    \begin{align}
        \label{ieq:wbd}
        \|W_{t,i}\|_F^2
        &=\|z_t\|_F^2+\|A_{\mathcal{B}_t^*}^{-1}A_tz_t\|_F^2 \nonumber\\
        &\leq
        \|z_t\|_F^2+\|A_{\mathcal{B}_t^*}\|_F^2\|A_t\|_F^2\|z_t\|_F^2,\\
        &\leq
        1+\bar{\sigma}^2m^2n\nonumber
    \end{align}
    where the first line comes directly from the definition of $W_{t,i}$, the second line comes from the inequality that $\|AB\|_F\leq\|A\|_F\|B\|_F$ for any two matrices $A,B$, and the last line comes from Assumption \ref{assp:bdd}.
    
    We say that Algorithm \ref{alg:prcpt} misclassifies one non-basic variable $i\in\mathcal{N}_t$ if $h_{t,i}(\Theta_{t,i})\leq 0$, and misclassifies one basic variable $i\in\mathcal{B}_t$ if $h_{t,i}(\Theta_T)> 0$. Since the values of entries of the reduced cost vector corresponding to the true basis $\mathcal{B}_t$ are $0$ for all $t$ and $i$, algorithm \ref{alg:sgd} makes a mistake only if it misclassifies one non-basic variable. 
    Denote the number of identification mistakes at the $t$-th iteration as $K_t$ for $t=1,...,T$. Let $K=\sum\limits_{t=1}^{T}K_t$ be the number of all mistakes. From the updating rule, inequality \eqref{ieq:wbd} and the triangle inequality of the Frobenius norm, we have
    $
        \|\Theta_{t,i+1}\|_F^2
        \leq
        K_t(1+\bar{\sigma}^2m^2n),
    $
    and then,
    \begin{align}
    \label{ieq:pcpthigh}
        \|\Theta_{T+1}\|_{F}^2
        \leq
        K(1+\bar{\sigma}^2m^2n).
    \end{align}

    Moreover, under Assumption \ref{assp:theta}, there exists a matrix $\Theta^*$ such that 
    \begin{align*}
            \begin{matrix}
                \text{Trace}(W_{t,i}^{\top}\Theta^*)\geq 1, &\text{ if $i\in\mathcal{N}_t$},\\
                \text{Trace}(W_{t,i}^{\top}\Theta^*)\leq 0, & \text{ if $i\in\mathcal{B}_t$},
            \end{matrix}
    \end{align*}
    for all $t=1,...,T$. Then, we have once a mistake is made for some $i\in\mathcal{N}_t$ at time $t$
    \begin{align*}
        \text{Trace}((\Theta_{t,i+1}-\Theta_{t,i})^{\top}\Theta^*)
        =
        \text{Trace}(W_{t,i}^{\top}\Theta^*)
        \geq
        1,
    \end{align*}
    which implies
    \begin{align}
    \label{ieq:pcptlow}
        \text{trace}(\Theta_{T,n+1}^{\top}\Theta^*)
        \leq
        K.
    \end{align}
    Then, combining inequalities \eqref{ieq:pcpthigh} and \eqref{ieq:pcptlow}, we have
    \begin{align*}
        K &\leq \text{trace}(\Theta_{T,n+1}^{\top}\Theta^*)\\
        &\leq
        \bar{\Theta}\|\Theta_{T,n+1}\|_F\\
        &\leq
        \bar{\Theta}\sqrt{K(1+\bar{\sigma}^2m^2n)},
    \end{align*}
    where the first line comes from \eqref{ieq:pcptlow}, the second line comes from Assumption \ref{assp:theta} that $\|{\Theta}^*\|_F\leq\bar{\Theta}$ and Cauchy inequality, and the last line comes from \eqref{ieq:pcpthigh}. Dividing each side by $\sqrt{K}$ and taking square,
    $$
        K\leq
        \bar{\Theta}^2+\bar{\sigma}^2\bar{\Theta}^2m^2n.
    $$
    
    Moreover, Lemma \ref{lem:opt_con} tells that one can recover the optimal solution if the optimal basis is identified. This statement implies that $K$ is an upper bound of times that we cannot identify the true optimal solutions by Algorithm \ref{alg:prcpt}. Thus, we have
    $$
        |\{t\in[T]:x_t^*\not=x_t\}|\leq \bar{\Theta}^2+\bar{\sigma}^2\bar{\Theta}^2m^2n.
    $$
    
    For the generalization bound, we apply the symmetry of samples, follow similar steps in the last paragraph of the proof of Proposition \ref{prop:sgd} and have
    $$
        \mathbb{E}(\mathbb{P}(x^*_{\text{new}}\not=\hat{x}_{\text{new}}))
        \leq
        \frac{\bar{\Theta}^2+\bar{\sigma}^2\bar{\Theta}^2m^2n}{T}.
    $$

\end{proof}

\section{Additional Discussions}

\subsection{Why structured prediction does not work}

\label{sec_struct_svm}

One might wonder if our Maximum Optimality Margin approach can be directly solved as a structured classification problem by using structured SVM classifier. In this section, we will argue that the classical ways of structured SVM without any surrogate loss functions are computationally intractable.

The goal of such structured SVM classifier to estimate the optimal basis (or equivalently, the non-basic variables) by observing $\{(z,A)\}$ and a predictor trained on $\{(z_t, A_t)\}$'s and their corresponding labels $\{y_t\}$'s, where we define $(y_t)_i = +1$ for $i \in \mathcal{N}_t$ and $(y_t)_i = -1$ for $i \in \mathcal{B}_t$. The Maximum Optimality Margin approach is a principle to maximize the estimated reduced cost vectors for non-basic variables. To express this principle more explicitly, the margin term one wants to maximize in structured SVM can be written as
$$ \max_{\Theta \in \mathcal{K}} \sum_{j \in \mathcal{N}}\hat{r}_{j}, $$
where $\hat{r}_j$'s are to be specified later.

For the simplicity of notations, we define some auxiliary vectors and matrices named as $\bm{1}$, $J_{\mathcal{B}_t}$, $J_{\mathcal{N}_t}$, and $\Phi_{t,y_t}$, where
$$ \bm{1}\in \mathbb{R}^{n-m}, \ \bm{1}_j = 1, \forall j \in [n-m], $$
$$ J_{\mathcal{B}_t}v = v_{\mathcal{B}_t}, \ \forall v \in \mathbb{R}^n, $$
$$ J_{\mathcal{N}_t}v = v_{\mathcal{N}_t}, \ \forall v \in \mathbb{R}^n, $$
$$ \Phi_{t,y_t} \coloneqq J_{\mathcal{N}_t} - A_{t,\mathcal{N}_t}^\top A_{t,\mathcal{B}_t}^{-\top} J_{\mathcal{B}_t}. $$
Specifically, the estimated reduced cost with respect to $y_t$ can be written as
$$ \hat{r}_{t,y_t} = \Phi_{t,y_t} \Theta z_t. $$
Hence the margin term (where the subscript $t$ is sometimes omitted for simplicity) is
$$ \sum_{j \in \mathcal{N}}\hat{r}_{j} = \bm{1}^\top \hat{r} = \langle \Theta, \Phi_{y}^\top \bm{1} z^\top \rangle, $$
which implies the feature map
$$ \phi((z_t, A_t), y_t) = \Phi_{t,y_t}^\top \bm{1} z_t^\top. $$
Note that the corresponding label space 
$$\mathcal{Y} = \left\{ y \in \{-1, +1\}^n, \#\{t, y_t = +1\} = m\right\}$$
is exponentially large in $n$ in general cases. We also define some measurement of difference $\Delta(\cdot, \cdot) \in \mathcal{Y} \times \mathcal{Y} \rightarrow \mathbb{R},$ where $\Delta(y, y^\prime) \geq 0$ and $\Delta(y,y) = 0$. For example, the $\Delta(y,y^\prime)$ function can be $\mathbbm{1} \{y \neq y^\prime\}$ and we retrieve the multiclass Hinge loss. Such $\Delta(y,y^\prime)$ can also be defined to be the Hamming distance and so on.

Equipped with those notations,
the structured SVM problem can now be formulated as:
\begin{align*}
    \min_{\Theta\in \mathcal{K}, s\in\mathbb{R}^T, y\in \mathcal{Y}}\ & \frac{\lambda}{2} \|\Theta\|^2 + \frac{1}{T} \sum_{t=1}^T s_t,\\
    \text{s.t. } & s_t \geq \Delta(y_t, y) - \langle \Theta, \phi((z_t, A_t), y_t)\rangle + \langle \Theta, \phi((z_t, A_t), y)\rangle, \quad \forall t \in [T],\ \forall y \in \mathcal{Y},\\
    & s_t \geq 0,\quad \forall t \in [T].
\end{align*}
We thereby note that solving the above problem requires solving another sub-problem where
$$ g_t(\Theta) \coloneqq s_t^* = \max_{y \in \mathcal{Y}} \left\{ \Delta(y_t, y) - \langle \Theta, \phi((z_t, A_t), y_t)\rangle + \langle \Theta, \phi((z_t, A_t), y)\rangle \right\}.  $$

But the above sub-problem is computationally intractable since the precise evaluation of $g_t(\Theta)$ requires solving a discrete optimization problem with exponentially large feasible set $\mathcal{Y}$. Such an obstacle makes the training hard to implement. Besides, even if we get the training result $\tilde{\Theta}$, the following inference problem
$$ \tilde{f}(z,A) = \argmax_{y\in \mathcal{Y}} \langle \tilde{\Theta}, \phi((z,A),y)\rangle $$
is still highly intractable.

\subsection{Scale consistency of least squares linear regression}
\label{sec:consist_of_lr}

In the previous sections, we raise the point of scale consistency several times. Specifically, the key point made is that the objective vectors of $c$ and $\alpha c$ for $\alpha>0$ produce the same optimal solution. Therefore, the algorithm should account for this scale invariance as there might be some scale contamination of the training data in application contexts such as revealed preference/stated preference. Here we present a self-contained result on the scale consistency of the linear regression model that might be of independent interests.

Consider the linear regression model where one wants to estimate the true coefficient matrix using cost vectors $c_t \in \mathbb{R}^n$ and feature vectors $z_t \in \mathbb{R}^d$. Instead of observing true $c_t$'s, we only observe their perturbed/contaminated versions with scale noises $(1+\alpha_t) c_t$'s, where $\alpha_t$'s here are some random variables. We claim that if $\alpha_t$'s are i.i.d. generated and independent of $z_t$ and $c_t$, then the ordinary least squares method is still consistent if $\mathbb{E}[\alpha] = 0$.
\begin{proposition}[Scale Consistency of Ordinary Least Squares]
Assume $c = {\Theta}^* z + \epsilon$, where $\mathbb{E}[\epsilon | z] = 0$, ${\Theta}^* \in \mathbb{R}^{n \times d}$ is the true underlying coefficient matrix. Assume $\alpha$ is independent of $z$ and $c$. Assume we observe $T$ i.i.d. samples $( z_t, (1+\alpha_t) c_t)$'s. Assume $z_t$, $c_t$, and $\alpha_t$ have finite second-order moments, which implies that they follow the strong law of large numbers. Further assume that $c_t$ and $\alpha_t$ have finite fourth order moment, which implies the strong law of large numbers for $c_t^2$ and $\alpha_t^2$. Assume $\mathbb{E}[z_t z_t^\top] = \Sigma$, and $\Sigma$ is non-singular. We estimate the underlying coefficient matrix by the ordinary least squares method:
$$ \hat{\Theta}_T = \argmin_{\Theta} \left\{f_T(\Theta) \coloneqq \sum_{t=1}^T \frac{1}{T} \|(1+\alpha_t)c_t - \Theta z_t\|_2^2\right\}.$$

Then 
$$\hat{\Theta}_T \xrightarrow[]{\text{a.s.}} (1+\mathbb{E}[\alpha]){\Theta}^*, \quad \text{as }T \rightarrow \infty. $$
\end{proposition}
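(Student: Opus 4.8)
The plan is to exploit the fact that the least-squares objective is quadratic in $\Theta$, so that the estimator is pinned down by the normal equations, and then to pass to the limit $T\to\infty$ by applying the strong law of large numbers (SLLN) entrywise to the empirical moment arrays. Concretely, setting $\nabla_\Theta f_T(\hat\Theta_T)=0$ yields
\begin{equation*}
\hat\Theta_T \widehat{\Sigma}_T = \widehat{M}_T, \qquad \widehat{\Sigma}_T \coloneqq \frac{1}{T}\sum_{t=1}^T z_t z_t^\top, \quad \widehat{M}_T \coloneqq \frac{1}{T}\sum_{t=1}^T (1+\alpha_t)c_t z_t^\top,
\end{equation*}
so that $\hat\Theta_T = \widehat{M}_T\,\widehat{\Sigma}_T^{-1}$ whenever $\widehat{\Sigma}_T$ is invertible. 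The whole argument then reduces to identifying the almost-sure limits of $\widehat{\Sigma}_T$ and $\widehat{M}_T$ and invoking continuity of matrix inversion.

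First I would treat the denominator: each entry $(\widehat{\Sigma}_T)_{ij}=\frac1T\sum_t z_{t,i}z_{t,j}$ is an empirical average of i.i.d. terms with finite mean (by Cauchy--Schwarz and the finite second moments of $z$), so by the SLLN $\widehat{\Sigma}_T \to \Sigma$ almost surely; since $\Sigma$ is nonsingular, $\widehat{\Sigma}_T$ is invertible for all large $T$ almost surely. Next I would treat the numerator, whose key ingredient is the evaluation of $\mathbb{E}[(1+\alpha)c z^\top]$: using the independence of $\alpha$ from $(c,z)$ to factor the mean of $(1+\alpha)$ out, together with $c=\Theta^* z+\epsilon$ and $\mathbb{E}[\epsilon\mid z]=0$, one obtains
\begin{equation*}
\mathbb{E}\!\left[(1+\alpha)c z^\top\right] = (1+\mathbb{E}[\alpha])\,\mathbb{E}[c z^\top] = (1+\mathbb{E}[\alpha])\left(\Theta^*\,\mathbb{E}[zz^\top] + \mathbb{E}[\epsilon z^\top]\right) = (1+\mathbb{E}[\alpha])\,\Theta^*\Sigma .
\end{equation*}
Applying the SLLN entrywise to $\widehat{M}_T$ (the summands again have finite mean by Cauchy--Schwarz and the moment hypotheses, with independence used to split the product) gives $\widehat{M}_T \to (1+\mathbb{E}[\alpha])\Theta^*\Sigma$ almost surely.

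Combining these two limits through the continuous mapping theorem — inversion is continuous at the invertible matrix $\Sigma$ — yields, on the almost-sure event where both limits hold and $\widehat{\Sigma}_T$ is eventually invertible,
\begin{equation*}
\hat\Theta_T = \widehat{M}_T\,\widehat{\Sigma}_T^{-1} \xrightarrow{\text{a.s.}} (1+\mathbb{E}[\alpha])\Theta^*\Sigma\,\Sigma^{-1} = (1+\mathbb{E}[\alpha])\Theta^*,
\end{equation*}
which is the claim. Equivalently, one can bypass the closed form by showing that the convex quadratic $f_T(\Theta)$ converges almost surely pointwise to the deterministic quadratic $f(\Theta)=\mathbb{E}[(1+\alpha)^2\|c\|_2^2] - 2(1+\mathbb{E}[\alpha])\operatorname{tr}(\Theta\Sigma\Theta^{*\top}) + \operatorname{tr}(\Theta^\top\Theta\Sigma)$ — this is precisely where the SLLN for $c_t^2$ and $\alpha_t^2$, hence the fourth-moment hypotheses, is needed, in order to control the $\Theta$-free term $\frac1T\sum_t(1+\alpha_t)^2\|c_t\|_2^2$ — and then arguing that the unique minimizer $(1+\mathbb{E}[\alpha])\Theta^*$ of $f$ is the limit of the minimizers of $f_T$.

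I expect the only genuine subtlety to be the moment bookkeeping that licenses each SLLN: every summand is a product of (functions of) $\alpha$, $c$, and $z$, so one must verify finite first moments via Cauchy--Schwarz and must use the independence of $\alpha$ to factor its mean out cleanly — it is exactly this factoring that produces the scalar $1+\mathbb{E}[\alpha]$ rather than something entangled with $\Theta^*$. The passage from objective (or coefficient) convergence to minimizer convergence is then routine, since the explicit normal-equation solution turns it into a direct application of continuity of inversion; nonsingularity of $\Sigma$ is what guarantees the limiting minimizer is unique and well-defined.
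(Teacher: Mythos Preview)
Your proposal is correct, and in fact you outline two routes: your primary route via the normal equations $\hat\Theta_T=\widehat M_T\widehat\Sigma_T^{-1}$ together with the continuous mapping theorem, and an alternative via pointwise convergence of the objective $f_T\to f$ followed by convergence of minimizers. The paper takes only the second of these. The key step---evaluating $\mathbb{E}[(1+\alpha)cz^\top]=(1+\mathbb{E}[\alpha])\Theta^*\Sigma$ by factoring out $\alpha$ via independence and killing the $\epsilon z^\top$ term via $\mathbb{E}[\epsilon\mid z]=0$---is identical in both.

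One small observation worth making: your normal-equations route is actually slightly more economical with hypotheses. It only needs the SLLN for $z_tz_t^\top$ and $(1+\alpha_t)c_tz_t^\top$, whose entries have finite first moment under the second-moment assumptions alone (using independence to split $\mathbb{E}|1+\alpha|\cdot\mathbb{E}|c_iz_j|$ and Cauchy--Schwarz). The fourth-moment assumptions on $c$ and $\alpha$ are used in the paper's proof solely to control the $\Theta$-free term $\frac1T\sum_t(1+\alpha_t)^2\|c_t\|_2^2$, which is irrelevant to the location of the minimizer; you correctly flag this. So your primary argument is a mild sharpening, while your alternative reproduces the paper's line exactly.
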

\begin{proof}
W.l.o.g. we only prove the one-dimensional case $c \in \mathbb{R}$, since multi-dimensional cases can be proved similarly by breaking $\Theta = (\Theta_1, \dots, \Theta_n)^\top$ into $n$ independent $\Theta_i$'s. For the one-dimensional case, the estimator is now minimizing
\begin{align*}
    f_T(\Theta) & = \frac{1}{T} \sum_{i=1}^T \|z_t^\top \Theta - (1+\alpha_t)c_t\|_2^2\\
    & = \Theta^\top \left[\frac{1}{T} \sum_{i=1}^T z_t z_t^\top\right] \Theta - \left[\frac{1}{T}\sum_{t=1}^T 2(1+\alpha_t) c_t z_t^\top\right]\Theta + \frac{1}{T}\sum_{t=1}^T (1+\alpha_t)^2 c_t^2.
\end{align*}
Note that we assume that $z_t$'s, $c_t$'s, $c_t^2$'s, $\alpha_t$'s, $\alpha_t^2$'s all have finite second-order moments, which implies
$$ \frac{1}{T} \sum_{t=1}^T z_t z_t^\top \xrightarrow[]{\text{a.s.}} \Sigma,\quad \text{as } T\rightarrow \infty, $$
$$ \frac{1}{T} \sum_{t=1}^T 2(1+\alpha_t)c_t z_t^\top \xrightarrow[]{\text{a.s.}} 2\mathbb{E}[(1+\alpha)c z^\top], \quad \text{as } T\rightarrow \infty, $$
$$ \frac{1}{T}\sum_{t=1}^T (1+\alpha_t)^2 c_t^2 \xrightarrow[]{\text{a.s.}} \mathbb{E}[(1+\alpha)^2 c^2] = (1+2\mathbb{E}[\alpha] + \mathbb{E}[\alpha^2])\mathbb{E}[c^2] < \infty, \quad \text{as } T\rightarrow \infty. $$
Combining the boundedness of the last component with the fact that $\Sigma$ is non-singular and positive semidefinite and the fact that $2\mathbb{E}[(1+\alpha)c z^\top]$ is bounded (which will be shown later), we have 
$$f_T \xrightarrow[]{\text{a.s.}} f, \quad \text{as }T\rightarrow \infty,$$
where $f$ is a positive definite quadratic function of $\Theta$.

Therefore, the unique minimizer of $f$ must be its first-order stationary point. We now compute the partial derivatives. We have
\begin{align*}
    \mathbb{E}[(1+\alpha)c z^\top] & = \mathbb{E}[1+\alpha]\mathbb{E}[cz^\top]\\
    & = (1+\mathbb{E}[\alpha]) \mathbb{E}[\mathbb{E}[cz^\top|z]]\\
    & = (1+\mathbb{E}[\alpha]) \mathbb{E}[\mathbb{E}[{\Theta}^{*\top} z z^\top + \epsilon z^\top|z]]\\
    & = (1+\mathbb{E}[\alpha]) \mathbb{E}[{\Theta}^{*\top} z z^\top],\\
    & = (1+\mathbb{E}[\alpha]) {\Theta}^{*\top} \Sigma.
\end{align*}
where the first equality is from the fact that $\alpha_t$'s are independent of $z_t$'s and $c_t$'s, the second equality follows the tower law of conditional expectation, the third equality comes from the linear assumption, and the fourth equality comes from $\mathbb{E}[\epsilon|z] = 0$. It follows immediately that
$$ \frac{\partial f}{\partial \Theta} = 2 \Sigma \Theta - 2(1+\mathbb{E}[\alpha])\Sigma {\Theta}^*. $$
Since $\Sigma$ is non-singular, we have proved that the unique minimizer of $f$ is exactly $(1+\mathbb{E}[\alpha]) {\Theta}*$. 

Then from the fact that $f_T \xrightarrow[]{\text{a.s.}} f$ and $f$ is positive definite quadratic function, we have
$$ \hat{\Theta}_T = \argmin f_T \xrightarrow[]{\text{a.s.}} \argmin f = (1+\mathbb{E}[\alpha]){\Theta}^*, \quad \text{as }T \rightarrow \infty. $$
Specifically, if $\mathbb{E}[\alpha] = 0$, we retrieve the true ${\Theta}^*$.
\end{proof}

\end{document}